\theoremstyle{definition}
\newtheorem{definition}{Definition}
\newcommand{\jiasi}[1]{\textcolor{red}{JC: #1}}
\newcommand{\squishlist}{
   \begin{list}{$\bullet$}
    { \setlength{\itemsep}{0pt}      \setlength{\parsep}{3pt}
      \setlength{\topsep}{3pt}       \setlength{\partopsep}{0pt}
      \setlength{\leftmargin}{1.0em} \setlength{\labelwidth}{1em}
      \setlength{\labelsep}{0.5em} } }
\newcommand{\squishend}{
    \end{list}  }
\definecolor{darkred}{RGB}{150,0,0}
\definecolor{darkgreen}{RGB}{0,150,0}
\definecolor{darkblue}{RGB}{0,0,200}
\newcommand{\eg}{\textit{e}.\textit{g}.,~}
\newcommand{\ie}{\textit{i}.\textit{e}.,~}
\newtheorem{lemma}{Lemma}
\newtheorem{proposition}{Proposition}
\newcommand{\beq}{\begin{equation}}
\newcommand{\ba}{\begin{align}}
\newcommand{\ea}{\end{align}}
\newcommand{\eeq}{\end{equation}}
\newcommand{\cmark}{\ding{51}}%
\newcommand{\xmark}{\ding{55}}%
\newcommand{\order}[1]{{\cal{O}}(#1)}
\newcommand{\bgl}{{~\big |~}}
\definecolor{emmanuel}{RGB}{255,127,0}
\newcommand{\qb}{{\vct{q}}}
\newcommand{\Pro}{\mathbb{P}}
\newcommand{\vct}[1]{\boldsymbol{#1}}
\newcommand{\LM}{M\xspace}
\newcommand{\LMPP}{(M+P)\xspace}
\newcommand{\alg}{{\sf \small TREACLE}\xspace}
\newcommand{\algU}{the Calibrated Cascade Algorithm\xspace}
\newcommand{\cost}{\texttt{cost}\xspace}
\newcommand{\reward}{\texttt{reward}\xspace}
\newcommand{\BMajority}{{Majority Voting}\xspace}
\newcommand{\OnlineKnapsack}{{Online Knapsack}\xspace}
\newcommand{\OfflineKnapsack}{{Offline Knapsack}\xspace}
\newtheorem*{rep@theorem}{\rep@title}
\newcommand{\newreptheorem}[2]{%
\newenvironment{rep#1}[1]{%
 \def\rep@title{#2 \ref{##1}}%
 \begin{rep@theorem}}%
 {\end{rep@theorem}}}
\def\eqref#1{equation~\ref{#1}}
\def\1{\bm{1}}
\def\eps{{\epsilon}}
\DeclareMathAlphabet{\mathsfit}{\encodingdefault}{\sfdefault}{m}{sl}
\SetMathAlphabet{\mathsfit}{bold}{\encodingdefault}{\sfdefault}{bx}{n}
\newcommand{\E}{\mathbb{E}}
\title{TREACLE: Thrifty Reasoning via Context-Aware LLM and Prompt Selection}
\title{Efficient Contextual LLM Cascades through Budget-Constrained Policy Learning}
\author{%
  Xuechen Zhang \\
  University of Michigan \\
  Ann Arbor, MI\\
  \texttt{zxuechen@umich.edu} \\
  \And
  Zijian Huang \\
  University of Michigan \\
  Ann Arbor, MI\\
  \texttt{zijianh@umich.edu} \\
  \And
  Ege Onur Taga \\
 University of Michigan \\
   Ann Arbor, MI\\
  \texttt{egetaga@umich.edu} \\
  \And
  Carlee Joe-Wong \\
  Carnegie Mellon University \\
  Pittsburgh, PA\\
  \texttt{cjoewong@andrew.cmu.edu} \\
    \And
  Samet Oymak \\
 University of Michigan \\
   Ann Arbor, MI\\
  \texttt{oymak@umich.edu} \\
    \And
  Jiasi Chen \\
 University of Michigan \\
   Ann Arbor, MI\\
  \texttt{jiasi@umich.edu} \\
}
\begin{document}

\maketitle

\begin{abstract}
Recent successes in natural language processing have led to the proliferation of large language models (LLMs) by multiple providers.
Each LLM offering has different inference accuracy, monetary cost, and latency, and their accuracy further depends on the exact wording of the question (\ie the specific prompt).
At the same time, users often have a limit on monetary budget and latency 
to answer all their questions, and they do not know which LLMs to choose for each question to meet their accuracy and long term budget requirements. 
        To navigate this rich design space, we propose \alg (\underline{T}hrifty \underline{Rea}soning via \underline{C}ontext-Aware \underline{L}LM and Prompt S\underline{e}lection),
a reinforcement learning policy that jointly selects the model and prompting scheme while respecting the user's monetary cost and latency constraints. TREACLE uses the problem context, including question text embeddings (reflecting the type or difficulty of a query) and the response history (reflecting the consistency of previous responses) to make smart decisions. 
Our evaluations on standard reasoning datasets (GSM8K, CSQA, and LLC) with various LLMs and prompts show that TREACLE enables cost savings of up to 85\% compared to baselines, while maintaining high accuracy. Importantly, it provides the user with the ability to gracefully trade off accuracy for cost. 
\end{abstract}

\section{Introduction}
\label{sec:intro}
\begin{wrapfigure}{r}{6cm}
 \vspace{-15pt}
\centering
\includegraphics[scale=0.23]{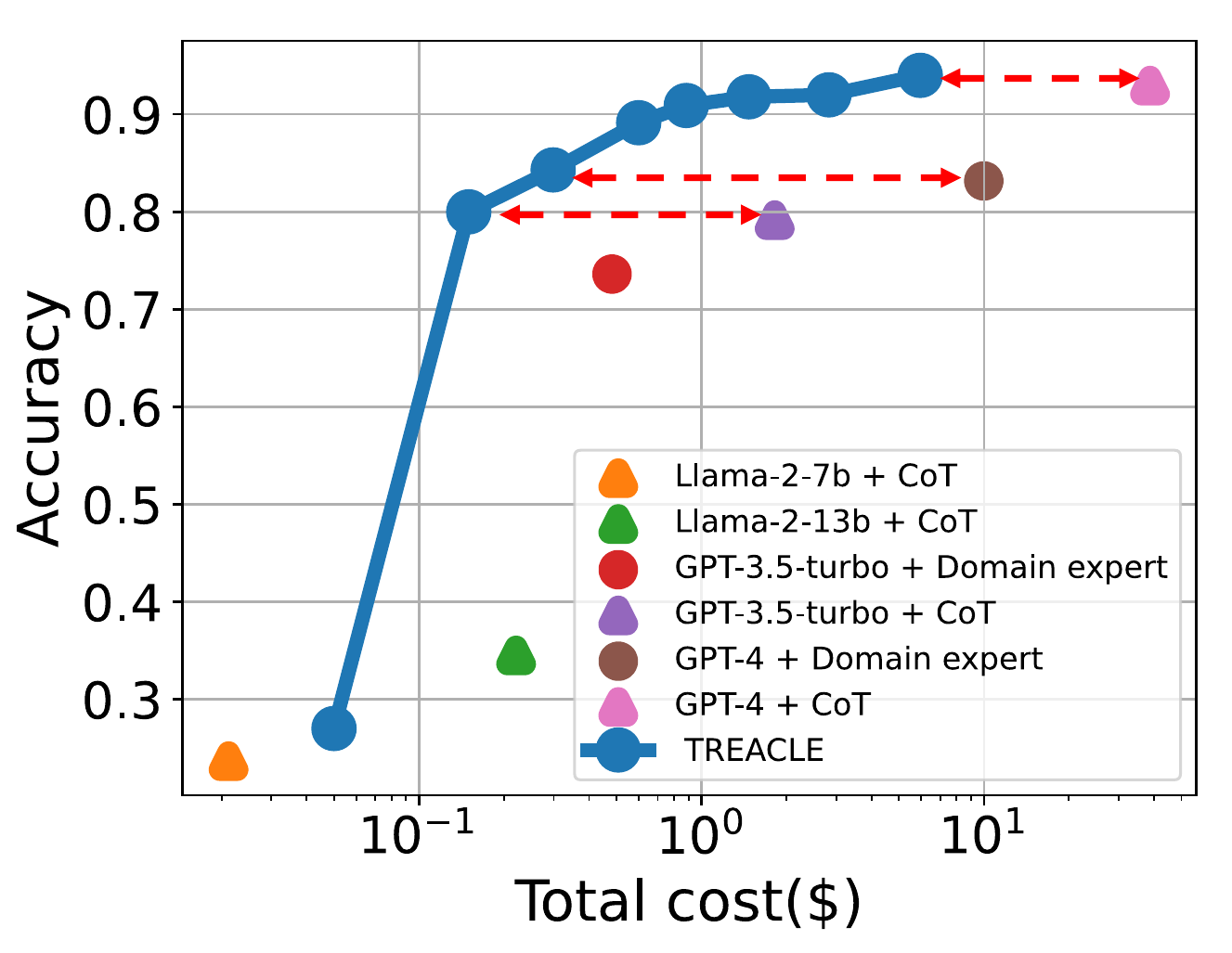}
\vspace{-2mm}
\caption{$\alg$ chooses LLMs to achieve high accuracy and $\sim$85\% cost reduction, compared to individual LLMs.}
\label{fig:clean-full}
\vspace{-15pt}
\end{wrapfigure}
\vspace{-5pt}
The success of large language models (LLMs) in recent years has led to a explosion of heterogeneous models and providers, including as Meta's Llama, OpenAI's ChatGPT, and Google's Gemini.
As LLMs continue to proliferate in the near future, we envisage a generative AI marketplace with a large variety of providers, LLMs, and deployments. Notably, LLMs have widely varying capabilities and costs: capabilities in terms of accuracy in responding to different types of queries, and cost in terms of monetary price and query latency.
As an illustration, the accuracy versus cost tradeoffs of various Llama and GPT LLMs are shown in \Cref{fig:clean-full} on grade school math word problems~\cite{cobbe2021training}. 
As can be seen, GPT-3.5 tends to have lower accuracy than GPT-4 (79\% vs 92\% respectively), but costs about 20 times less.
This heterogeneous array of LLMs can bewilder users who must choose between them.


Another challenge is that the \emph{specific prompt} included in the question plays a critical role in eliciting accurate responses. This is especially true for reasoning problems where
prompting a model to explain its reasoning can produce more accurate, but often more costly, answers.
Chain-of-thought (CoT)~\cite{wei2022chain} is an example of such a prompting scheme, in which the question includes a few examples of worked out problems, which cost more (due to the additional words included in the question) but also produce more accurate responses.
For example, in \Cref{fig:clean-full}, GPT-4 with CoT (pink triangle) achieves a 92\% accuracy, compared to GPT-4 with a domain expert prompt (brown dot, reminding the LLM that it is a ``math solver'') that achieves 83\%.
However, using the CoT prompt costs 3.9$\times$ more due to the extra words included in the query.
A final challenge is that the optimal choice of LLM and prompt depends on the \emph{specific question} being asked; the accuracy of a particular LLM and prompt combination for a particular question is unknown in advance, requiring learning or prediction.

Thus, the heterogeneity of the LLM landscape and the tradeoffs between accuracy and cost make it challenging to determine the optimal strategy of: \textbf{\emph{Which LLM to select and how to prompt it, in order to answer all questions while respecting cost constraints?}} 
To address this, we propose a \underline{T}hrifty \underline{Rea}soning via \underline{C}ontext-Aware \underline{L}LM and Prompt S\underline{e}lection ($\alg$) framework. $\alg$ is a learning-based approach that solves reasoning questions by automatically selecting which LLM model 
and prompt to use for each question.
Given a cost budget, including a total monetary price across all questions and an average per-query latency, its goal is to maximize the average accuracy of the responses.
As shown in \Cref{fig:clean-full}, $\alg$ achieves the Pareto front of individual LLMs by combining them intelligently. 



Several recent works utilize multiple LLMs during inference with a cascade design, where queries propagate through a cascade of LLMs, considering the LLMs' accuracy-cost tradeoffs. 
Most aim to maximize accuracy and lack an explicit way to control long-term costs, as \alg has. By posing the problem of LLM and prompt selection as a budget-constrained policy optimization, \alg provides a unified approach to efficient LLM cascades (see Table \ref{tab:compare}). 
\alg makes informed decisions based on the full context of the LLM cascade, including the query embedding, answer statistics, and remaining budget.
Overall, this paper makes the following contributions:





\squishlist
    \item \textbf{Characterization of the accuracy, monetary cost, and latency of LLMs.} To understand the trade-offs between the LLMs, we quantify the accuracy and cost of 5 different LLMs (Llama and GPT variants) with 3 different prompt strategies (standard, domain expert, and CoT) on 3 datasets (GSM8K, CSQA, and LLC).

    \item \textbf{An adaptive LLM and prompt selection policy based on reinforcement learning.} $\alg$ dynamically chooses the right LLM and prompt for each question.
    It does this by leveraging context about the current question, re-querying the models if needed to verify the consistency of the responses, and thinking ahead about the remaining budget. We also provide some theoretical justification for $\alg$'s key design choices.
  
    \item \textbf{Extensive evaluations.} 
    We show that \alg 
    substantially saves on cost while maintaining high accuracy on mathematical and commonsense reasoning tasks. 
    We demonstrate its robustness to different budgets, question difficulty, price changes, new LLMs, and new unseen task types. 
\squishend

The paper is organized as follows. 
We describe related work (\S\ref{sec:related}), the problem statement (\S\ref{sec:problem}), and our framework (\S\ref{sec:alg}).
We then describe our experiments (\S\ref{sec:experiments}) and conclusions (\S\ref{sec:conclusions}).
\vspace{-5pt}







\section{Related Work}
\label{sec:related}

FrugalGPT~\cite{chen2023frugalgpt} is perhaps the closest to this work, as they considered
 a similar cost-constrained LLM selection problem with a threshold-based policy to select from a sorted list of LLMs. 
Our approach differs in several key aspects: we utilize a reinforcement learning policy that chooses both LLMs and prompts, rather than a threshold-based scheme; we utilize the full context of the current question to make decisions, including the text embedding of the current question and the history of past responses; and our method 
can \emph{re-query} the same LLM and aggregate previous responses to estimate the correctness of the current response.
\begin{table}[h]
\centering
\caption{\small{Comparison to related works.}}\label{tab:compare}
    \resizebox{1.01\textwidth}{!}{%
\begin{tabular}{|c|c|c|c|c|c|c|c|}
\hline
& Query embedding& Response consistency& Prompt \emph{and} LLM selection &  Long-term budget &  Robust to new models\\ \hline
\textbf{FrugalGPT} \cite{chen2023frugalgpt} & \cmark & \xmark &  \xmark &    \xmark &    \xmark\\
\textbf{AutoMix} \cite{madaan2023automix}&  \cmark & \cmark & \xmark &   \xmark &    \xmark\\
\textbf{MoT} \cite{yue2023large}&  \xmark &  \cmark & \xmark &   \xmark &    \xmark\\
\textbf{TREACLE} & \cmark & \cmark &  \cmark &   \cmark& \cmark\\
\hline
\end{tabular}%
}
\vspace{-15pt}
\end{table}
Mixture of Thought~\cite{yue2023large} explored the idea of response consistency
in order to choose the right LLMs.
The intuition is that
higher consistency in the re-queries
implies higher confidence in the correctness of the response.
\alg employs response consistency as an input feature, along with other features, for LLM selection.
  AutoMix~\cite{madaan2023automix} introduces a ``meta-verifier'' to estimate whether a response is correct or a more powerful LLM is needed. 
  Both works measure cost as a by-product of combining multiple LLMs rather than long-term constraint across all questions, as we do.
Other lines of work include uncertainty estimation or prompt engineering to improve accuracy
\cite{lin2022teaching, xiong2023llms, yue2023large, si2023prompting,  cai2023humanintheloop, naik2023diversity}, which is complementary to our work.
The related work is summarized in \Cref{tab:compare}.



\vspace{-0.3cm}

\section{Problem Statement}
\label{sec:problem}

\vspace{-5pt}
We study the natural language query problem of providing correct responses to a series of questions.
We focus on reasoning problems (\eg grade school math problems) because they are challenging with multiple logical steps required to reach a final correct response.
The problem involves 
answering a sequence of $n$ questions $\mathcal{Q}$ with correct responses $\mathcal{Y}$; in other words, we have a set of questions and responses $\{(Q_1,Y_1), (Q_2,Y_2), \ldots, (Q_n,Y_n)\}$.
We have a set $\mathcal{\LM}$ of language models (LLMs) at our disposal,
which can be accessed either locally or remotely through APIs: $\mathcal{\LM} = \{\LM_1, \LM_2, \ldots, \LM_m\}$. Also, we have a choice between $p$ prompt types, $\mathcal{P} = \{P_1, P_2, \ldots, P_p\}$.
These models and prompts have different costs (in terms of latency and monetary price) and accuracy.
Each ``question'' can be asked multiple times to the same or different LLMs, which we call a ``re-query'', in order to possibly obtain a more final accurate response.

The goal is to ensure that as many responses as possible are correct, while simultaneously minimizing the associated costs. This problem can be formulated as a Constrained Markov Decision Process (CMDP), 
which is represented by a tuple $(\mathcal{Q},\mathcal{S},\mathcal{A},T,r,c,\gamma,B)$, where $\mathcal{Q}$ is the ordered question set; $\mathcal{S}$ is the state space; $T:(\mathcal{Q},\mathcal{S})\times \mathcal{A}\times (\mathcal{Q},\mathcal{S})\rightarrow[0,1]$ is
the transition probability function, \ie $(Q,s)_{t+1}\sim T(\cdot |(Q,s)_t,a_t)$; $r:(\mathcal{Q},\mathcal{S})\times\mathcal{A}\rightarrow [R_{\text{min}},R_{\text{max}}]$ and $c:(\mathcal{Q},\mathcal{S})\rightarrow \mathbb{R}^+$ denote the reward and cost function; $\gamma\in[0,1]$ is the discount factor for future reward and cost; and $B$ is the total budget. A policy $\pi:(\mathcal{Q},\mathcal{S})\rightarrow P(\mathcal{A})$ maps the question-state pairs to a probability distribution over actions. 
A trajectory $\tau$ is composed of a sequence of (question, state)-action pairs: $\tau=\left\{\tau_{(Q,s)_0},\tau_{a_0},\tau_{(Q,s)_1},\tau_{a_1},...,\tau_{(Q,s)_L},\tau_{a_L}\right\}$, where $L$ is the total number of times the LLMs are queried. Note that $L\geq n$, due to the possible re-queries. The cumulative reward and cumulative cost of trajectory $\tau$ are denoted as $R(\tau)=\sum_{t=0}^L\gamma^t r(\tau_{(Q,s)_t},\tau_{a_t})$ and $C(\tau)=\sum_{t=0}^L\gamma^tc(\tau_{(Q,s)_t},\tau_{a_t})$, respectively.
The goal of our problem is to learn a policy $\pi$ from $\mathcal{D}$ that maximizes the expected cumulative reward, while satisfying the cumulative cost at the trajectory level:
\begin{align}
    \max_\pi \mathbb{E}_{\tau\sim\pi,T}[R(\tau)], \;
    \text{s.t.} \; \forall \tau\sim\pi,T\quad C(\tau)\leq B. 
    \label{eqn:rl_problem}
\end{align}
where $\tau\sim\pi,T$ denotes that $\tau$ is generated by executing $\pi$ in $T$. 
By grouping the rewards and costs by question instead of enumerating all re-queries, the cumulative reward and cost can be re-written as
$R(\tau) = \sum_{i=1}^{n}\reward(Y_i,\hat{Y}_i)$ and 
$C(\tau) = \sum_{i=1}^{n}\cost(Q_i)$,
where $\hat{Y}_i$ is the final response for question $Q_i$, $\reward(\cdot)$ is the function that measures the correctness of the final response, $\cost(\cdot)$ is the cost function of giving a final response $\hat{Y}_i$ for question $Q_i$.
\begin{figure}
\centering
\centering
  \includegraphics[width=0.8\textwidth]{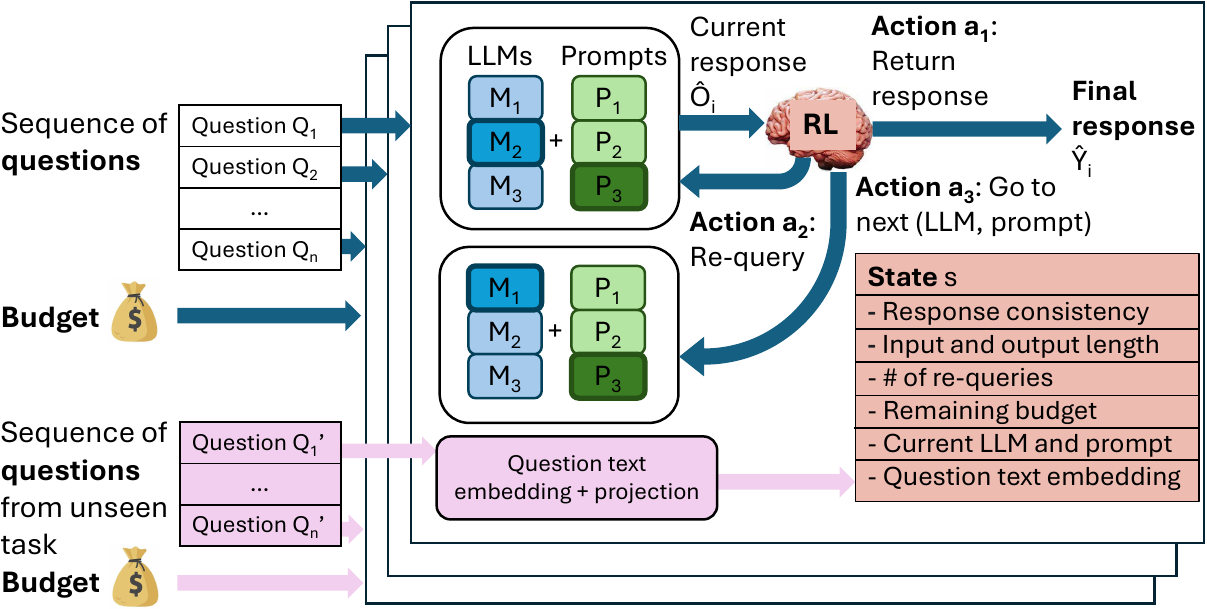}
\caption{Overview of \alg framework. \alg decides on the next (LLM, prompt) to query in a context-aware fashion, summarized in the state variable. It can adapt to unseen tasks by projecting the new queries into the text embedding space. 
}\vspace{-10pt}
\label{fig:overview}
\end{figure}
\textbf{Cost functions.}
We consider two types of costs in this work, monetary price and latency, resulting in two types of cost functions.
     \textbf{\emph{(1) Pure monetary price.}} LLMs can run remotely, where the monetary price per token is set by the provider (\eg OpenAI). LLMs can also run locally, where the monetary price depends on a number of factors such as capital expenditures, server cooling and maintenance, electricity, etc.
    In our setup, the GPT models run remotely and the Llama models, which are free and open-source, run locally.
   \textbf{\emph{(2) Monetary price-latency combination.}} Monetary price is important for some users (\eg small companies) while latency plays a more crucial role in  other settings (\eg real-time voice assistants). Users who are latency-sensitive may be willing to pay more for lower latency, whereas others might be more patient and prefer lower prices. 
    \alg allows users to choose the trade-off between monetary cost and latency by adjusting a trade-off coefficient $\beta$, where $\cost=\text{latency} + \beta * \text{monetary price}$. 
    \vspace{-5pt}

\section{Proposed Framework: TREACLE}
\label{sec:alg}

We propose the \alg framework, depicted in \Cref{fig:overview}.
 Let the possible unique combinations of language models and prompts be denoted by $\{ \LMPP_1, \LMPP_2, \ldots, \LMPP_K \}$, where $K \leq mp$.
 When a new question $Q_i$ arrives, \alg starts by selecting a model $\LM \in \mathcal{M}$ and choosing an associated prompt $P \in \mathcal{P}$ to generate a response, denoted as $\hat{O}_i = \LM(P(Q_i))$. \alg returns this as the final response $\hat{Y}_i$ for this question if it has a high degree of confidence in its correctness, and deducts the cost of the question and its response from the total budget $B$. 
 Otherwise, \alg can select another LLM $M$ and prompt $P$ (whose choice may be informed by the result of all previously chosen models, prompts, and their responses) and re-query.
 This iterative process continues until \alg returns a final response (based on its learned policy).
  \alg then proceeds to the next question with the remaining budget and repeats the process, until all questions have been answered or there is no remaining budget. 
 We model the problem as a Markov decision process as described in \Cref{sec:problem}. 

\textbf{States.}
The state vector contains the following information: 
\squishlist
\item \textbf{Response consistency:} Records all previous responses
and the normalized frequency of their occurrences.
The intuition is that the consistency of the previous responses can be used as a measure of confidence in the response correctness~\cite{wang2022self,madaan2023automix}.
\item \textbf{Input and output length:} The number of tokens in the current query and any preceding responses to the same query.
This helps \alg understand the monetary price of each query and response, which can differ for each query. It also helps capture the difficulty, as question with longer input or output tend to be harder.
\item \textbf{Current question's text embedding:} 
Intuitively, we want to capture the question type or difficulty, which can impact the model and prompt selection decision.
\alg does this using a text embedding of the query \cite{openai_embedding_model}. 
\item \textbf{Number of re-queries:} 
The number of re-queries for each model-prompt pair helps \alg decide whether to re-query again or move to the next question.

\item \textbf{{Normalized remaining budget:}} Based on the remaining budget, we compute the estimated number of queries for each model prompt pair as follows:
$\mathcal{B}_k =  \frac{\text{total remaining budget}}{(\text{\# questions remaining})(\text{avg cost per query of $\LMPP_k$})}$.
The average cost per query is estimated based on the questions seen so far.
If there is a large remaining budget, \alg may consider re-querying with large models.
\squishend

\textbf{Actions.}
The action space $\mathcal{A}$ consists of the following: 
\squishlist
\item Action $a_1$: Return the current response for $Q_i$ and proceed to the next question $Q_{i+1}$. If no models have been queried yet and this action is chosen, it is equivalent to skipping the question.
\item Action $a_2$: Re-query the same model-prompt pair $\LMPP$ for $Q_i$.
\item Action $a_3$: Select a new model-prompt pair $\LMPP^\prime$ for $Q_i$. 
\squishend

\vspace{-3pt}
By allowing re-querying (action $a_2$), the current action influences the next state, by impacting the question under consideration and thus the relevant state features, making this a non-trivial MDP.
For $a_3$, we constrained the set of possible model-prompt pairs to a sorted list.
In other words, instead of allowing \alg to select any possible model and prompting scheme, we sort the $\LMPP_k$ in ascending order of accuracy to cost ratio and only allow \alg to select the next element in this list $\LMPP_{k+1}$.
The ordering is based on \Cref{prop:llm_ordering} (discussed below), 

\textbf{Rewards.}
The reward function assigns a positive reward to correct responses.
Specifically, $r_{\tau_a}\left(\tau_{(Q,s)}, \tau_{(Q^{\prime},s^{\prime})}\right) = \mathbb{P}\left[\hat{Y} = Y | \tau_a=a_1 \right] + \lambda \mathbb{P}\left[\hat{O} = Y  | \tau_a \in \{a_1,a_2, a_3\} \right]$. 
For a given question, this combines the accuracy of the final response $\hat{Y}$ with the accuracy of the current response $\hat{O}$ (if there have been re-queries), with a scaling factor 
$\lambda$ between the two terms. 
We introduced the second term because without it, we observed that if \alg repeatedly chose action $a_2$ (re-querying), this would result in multiple state transitions with 0 reward, until the final response was returned.
In other words, including the second term avoids the issue of sparse rewards that resulted from the first term alone.
Note that the correct response $Y$ is known only when training \alg; during test, the policy executes using the expected reward calculated by the trained policy. 



 \textbf{Design choices and justifications.}
We next discuss two key design choices of \alg and their theoretical motivation.
Proofs are in \Cref{app:theory}.

\textbf{\emph{(1) How should the LLMs and prompts be ordered in the cascade?}}
Recall that action $a_3$ moves to the next $\LMPP$ in the cascade. What is the best ordering of $\LMPP$?
Consider the following simplified setting.
Suppose each of the $\LMPP$ have a probability of correct response $p_k$ and cost $c_k$. 
If we had access to an oracle that 
could tell us when the response of a particular $\LMPP_k$ is incorrect, we could then move on and try the same question with the next option $\LMPP_{k+1}$ in the cascade.
We could achieve the highest accuracy using the oracle, and would only have to worry about minimizing the cost to avoid exceeding the budget.
In this setting, \Cref{prop:llm_ordering} below states that the best ordering of the $\LMPP$ options is according to the ratio $\frac{p_k}{c_k}$.

\vspace{-2pt}
\begin{proposition}
With $K$ (LLM, prompt) options, each with probability of correct answer $p_k$ and cost $c_k$,
ordering the options according to their cost-normalized accuracies $\frac{p_k}{c_k}$ minimizes the total cost.
\label{prop:llm_ordering}
\end{proposition}
\vspace{-5pt}

This Proposition motivates \alg's ordering of the $\LMPP$ options in the cascade according to their accuracy and cost ratio.
This is intuitive: instead of placing the most accurate (LLM, prompt) option early in the cascade, which might incur large cost, we first query LLMs that have high accuracy per unit cost.
Note that although the setup of \Cref{prop:llm_ordering} differs from \Cref{eqn:rl_problem}, as the cost is the objective rather than a constraint, the trajectory resulting from the ordering in  \Cref{prop:llm_ordering} is also a solution to \Cref{eqn:rl_problem}.


\textbf{\emph{(2) Do policies that consider response consistency perform well?}}
Recall that ``response consistency'' is one of the features in the state vector.
We seek to understand the performance of policies that consider this feature; a simple such policy is described in \Cref{def:n_consistent} below. It returns a final response to a question if the same response value is repeated $w$ times. 

\begin{definition}
For each question $Q_i$, an $w$-consistent policy ($w\geq2$) sets the final response $\hat{Y}_i = \hat{O}_i$ as soon as $\exists \; \hat{O}_i : \mathtt{count}(\hat{O}_i)=w$. If no such $\hat{O}_i$ exists, fall back to $w-1, w-2$, etc. 
\label{def:n_consistent}
\end{definition}
\vspace{-5pt}

\Cref{def:second_moment} below characterizes how likely the $\LMPP$ are to return an incorrect response.
A question can be asked $\Omega$ times to the $\LMPP$ options in the cascade, which may not be unique due to the re-queries.

\begin{definition} 
Denote the $\Omega$ LLM-prompt options by $\LMPP_{j=1}^\Omega$. 
Let $\Pro(M_j(P_j(Q_i)))$ be the output distribution of $\LMPP_j$ on problem $Q_i$. Let
$
    \epsilon :=   \sum_{i=1}^n \sup_{1\leq j \leq \Omega} \sum_{\hat{O} \neq Y_i }\Pro(M_j(P_j(Q_i))=\hat{O})^2. \nonumber
$
\label{def:second_moment}
\vspace{-0.15in}
\end{definition}

\vspace{-5pt}
With the definitions in hand, we can now lower bound the performance of a 2-consistent policy compared to the optimal learned algorithm in \Cref{prop:accuracy_loss_2_consistent} below.
Without loss of generality, we study the case when the reward function is the accuracy. 

\begin{proposition} For the problem stated in  \Cref{eqn:rl_problem} that achieves $C_*$, the optimal expected accuracy subject to budget constraints,
there exists a 2-consistent policy that achieves an accuracy of at least $C_*-\frac{1}{2}\Omega^2\epsilon$. 
\label{prop:accuracy_loss_2_consistent}
\end{proposition}

\vspace{-5pt}
In other words, even a simple policy that allows for re-querying and considers response consistency can achieve close to the optimal reward.
This motivates \alg inclusion of ``response consistency'' as a state feature.
The proposition applies generally and allows for budgets or text embeddings, and does not require the $\LMPP$ to return accurate responses.
Experimentally, we find that our learned RL policy is similar to a 2-consistent policy, as 93.02\% of the responses are 2-consistent (GSM8K dataset, \$0.30 budget, $\alpha=\frac{1}{20}$). 
This suggests that our learned policy may not be far from optimal.


\section{Experiments}
\label{sec:experiments}

We first describe the experiment setup (\S\ref{sec:exp_setup}) and then the main results (\S\ref{sec:exp_results}). 
Specifically, we examine robustness to new LLMs and changing API prices (\S\ref{sec:new_llm}), 
shifts in question difficulty (\S\ref{sec:domain_shift}), and different reasoning datasets (\S\ref{sec:datasets}).



\subsection{Experiment Setup}
\label{sec:exp_setup}


We summarize the experiment setup, with full details in \Cref{app:implement}.
We use three representative datasets: \textbf{GSM8K \cite{cobbe2021training}}, which contains 8.5K high quality grade school math problems created by human writers; \textbf{CSQA \cite{saha2018complex}}, which consists of 12102 multiple choice commonsense reasoning questions encountered in daily life; and \textbf{LLC~\cite{wei2022chain}}, where the task is to concatenate the last letters of words in a name (e.g., ``Amy Brown'' $\rightarrow$ ``yn''). 
To evaluate our methods, we perform two steps.

\emph{\textbf{(1) Collect query-response pairs for (LLM, prompt) combinations}}.
We collected query-response pairs from each dataset for different combinations 
of LLM, prompt, and LLM temperature.
    We used 5 different LLMs: \textbf{Llama-2-7b-chat}, \textbf{Llama-2-13b-chat} \cite{touvron2023Llama}, \textbf{GPT-3.5-turbo}, \textbf{GPT-4}, and \textbf{GPT-4-turbo}~\cite{OpenAI2023GPT4TR}. These models are of varying sizes (7b, 13b, 154b and 1.76t respectively).
    The Llama models are open-source and run locally on our servers,  
    while the GPT models rely on commercial APIs. 
We employ several prompting schemes.
    A prompt generally consists of two parts: the ``content message'' containing the question, and the ``system message'' with additional context. 

    \squishlist
         \item The \textbf{plain text prompt} 
         submits the questions to the LLM as the content message. 
        \item  The \textbf{domain expert prompt} feeds information about the question's domain as a system message (\eg ``math solver''), and keeping the user's content message as plain text. 
        
        \item The \textbf{standard few-shot prompt}  includes a system message (``Follow the given examples and answer the question'' \cite{wei2022chain}) and the content message, which consists of few-shot examples together with the plain text prompt.

        \item The \textbf{Chain-of-Thought (CoT) few-shot prompt}~\cite{wei2022chain} 
        adds some intermediate explanations to the few-shot examples. 
    \squishend
    

\textbf{\emph{(2) Train \alg with the query-response pairs.}}
We used Deep Q-Network (DQN)~\cite{mnih2015human} to train the reinforcement learning (RL) policy in \alg, consisting of a two-layer neural network.
For the monetary prices, we use the published per-token prices for the GPT models. 
Since our local Llama deployments do not have API costs, we set Llama-2-7b's price as $\alpha$ times Llama-2-13b's price, and Llama-2-13b's price as $\alpha$ times
GPT-3.5-turbo's price.
$\alpha$ varies between $\frac{1}{10},\frac{1}{20}$ or $\frac{1}{50}$.
Our pricing is grounded in reality and similar to actual market rates, as the offered price for Llama is approximately 15\% of GPT-3.5-turbo according to current providers~\cite{togetherpricing}. 
For the latency-accuracy tradeoff, we evaluate different trade-off parameters $\beta=[50\text{k},500\text{k},1\text{M}]$ in the cost function.
We evaluated the following baseline methods, reproducing the methods as faithfully as possible with a common set of LLMs and prompt options.
\squishlist
    \item \textbf{FrugalGPT}~\cite{chen2023frugalgpt}. We reproduce FrugalGPT, which uses a DistilBERT model~\cite{sanh2019distilbert} to estimate the response accuracy. If this estimate is below a threshold, the next LLM in the cascade is queried. 
    This baseline shows how \alg compares to the state-of-the-art that lacks re-querying.
    \item \textbf{Calibrated cascade.}
We build on FrugalGPT's response accuracy estimation and develop a 2-layer neural network, which takes as input \alg's state vector and outputs the estimated response accuracy.
 If this estimated accuracy is below a threshold, the next LLM in the cascade is queried. 
This baseline compares \alg to a modified FrugalGPT.

    \item \textbf{\BMajority}. 
    For each query, we output the final response based on the majority vote from $\Omega$ re-queries, based on \cite{wang2022self,yue2023large}.
    We set $N=2$ based on the best empirical results.
    The (LLM, prompt) combinations are progressively queried until their per-question budget 
    runs out.
    This baseline allows comparison with \alg's response consistency feature in the state vector.

        \item \textbf{Offline and online knapsack}. We formulate a multiple choice knapsack problem where the items are the $\LMPP$ combinations. 
    We solve the offline knapsack to find the optimal solution when re-queries are not allowed, and also implement an online version~\cite{chakrabarty2008online}.
    These baselines show how \alg compares to methods with perfect knowledge of question costs and accuracy. 
\squishend


\vspace{-5pt}
\subsection{Results}


\label{sec:exp_results}

\begin{figure*}[]
\centering
\begin{subfigure}[b]{0.32\textwidth}
    \centering
	\begin{tikzpicture}
		\node at (0,0) [scale=0.30]{\includegraphics{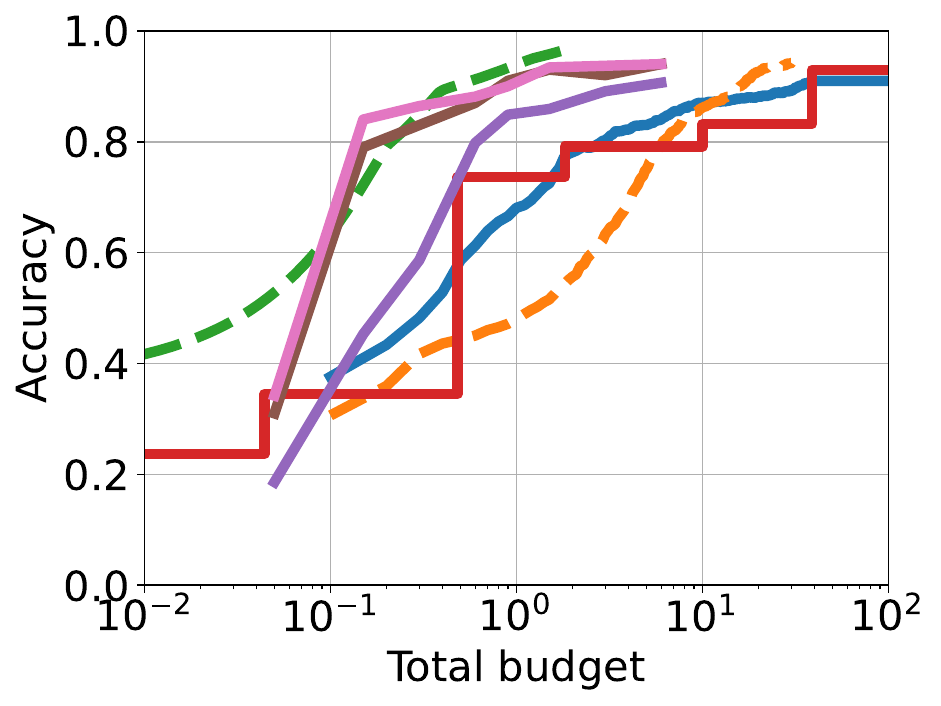}};
  \node at (0.8,-0.9) [scale=0.78] {GSM8K, $\alpha = \frac{1}{50}$};
\end{tikzpicture}
\end{subfigure}
\begin{subfigure}[b]{0.32\textwidth}
    \centering
	\begin{tikzpicture}
		\node at (0,0) [scale=0.30]{\includegraphics{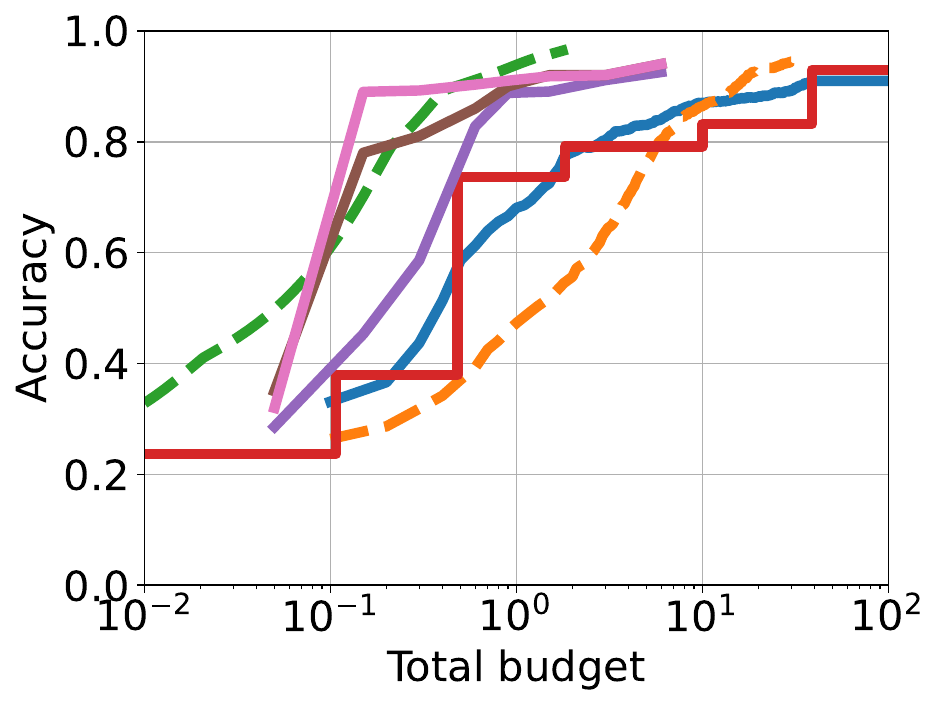}};
  \node at (0.8,-0.9) [scale=0.78] {GSM8K, $\alpha = \frac{1}{20}$};
\end{tikzpicture}\label{fig:ece}
\end{subfigure}
\begin{subfigure}[b]{0.32\textwidth}
    \centering
	\begin{tikzpicture}
		\node at (0,0) [scale=0.30]{\includegraphics{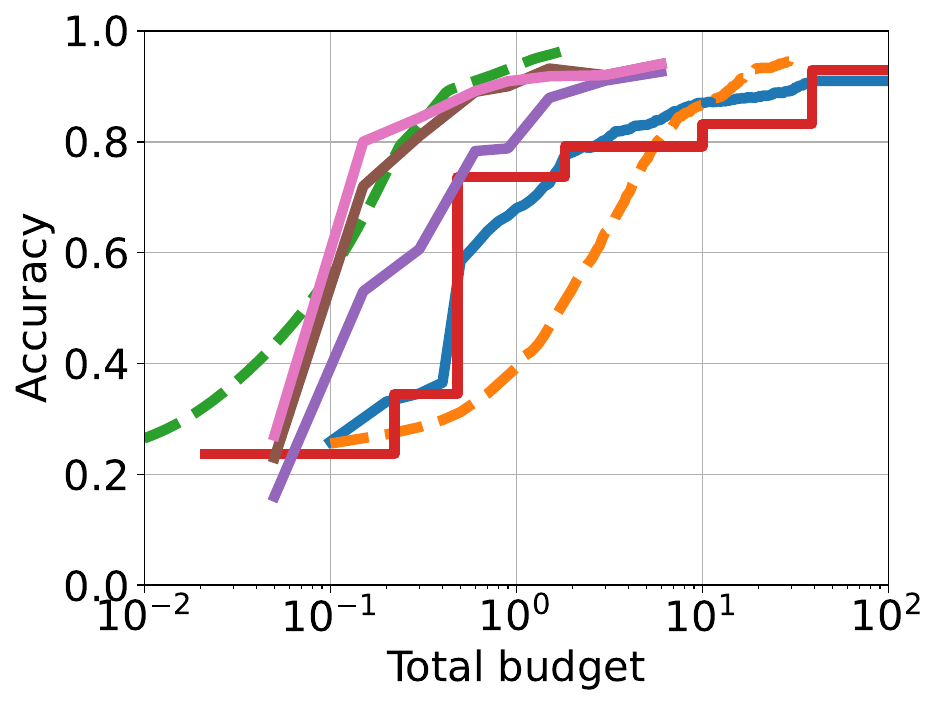}};
  \node at (0.8,-0.9) [scale=0.78] {GSM8K, $\alpha = \frac{1}{10}$};
\end{tikzpicture}
\end{subfigure}\\
\begin{subfigure}[b]{0.32\textwidth}
    \centering
	\begin{tikzpicture}
		\node at (0,0) [scale=0.30]{\includegraphics{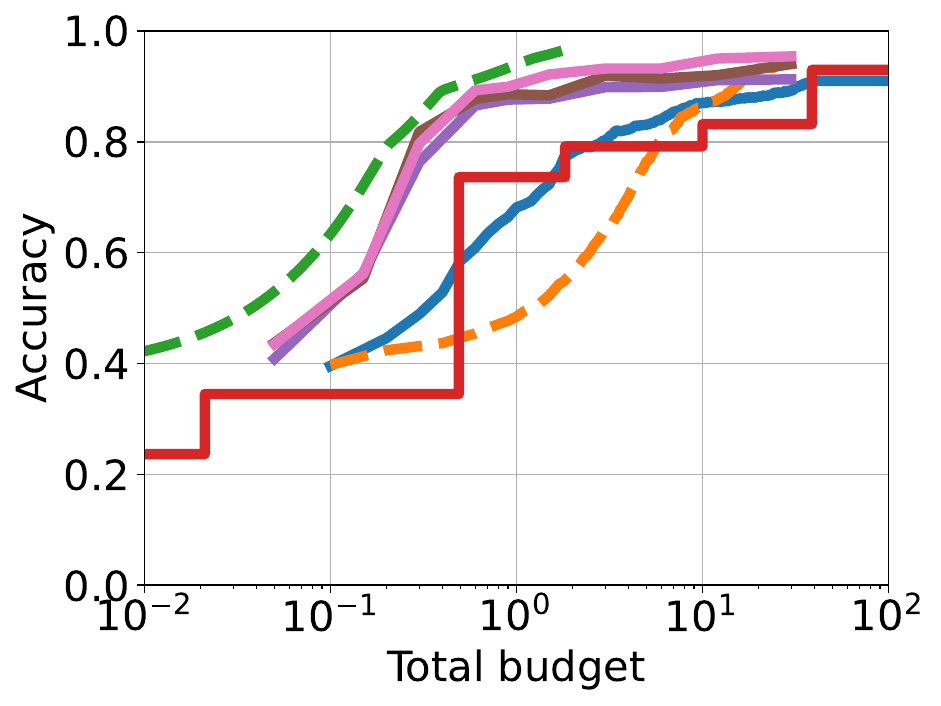}};
  \node at (0.8,-0.9) [scale=0.78] {GSM8K, $\beta $=1M};
\end{tikzpicture}
\end{subfigure}
\begin{subfigure}[b]{0.32\textwidth}
    \centering
	\begin{tikzpicture}
		\node at (0,0) [scale=0.30]{\includegraphics{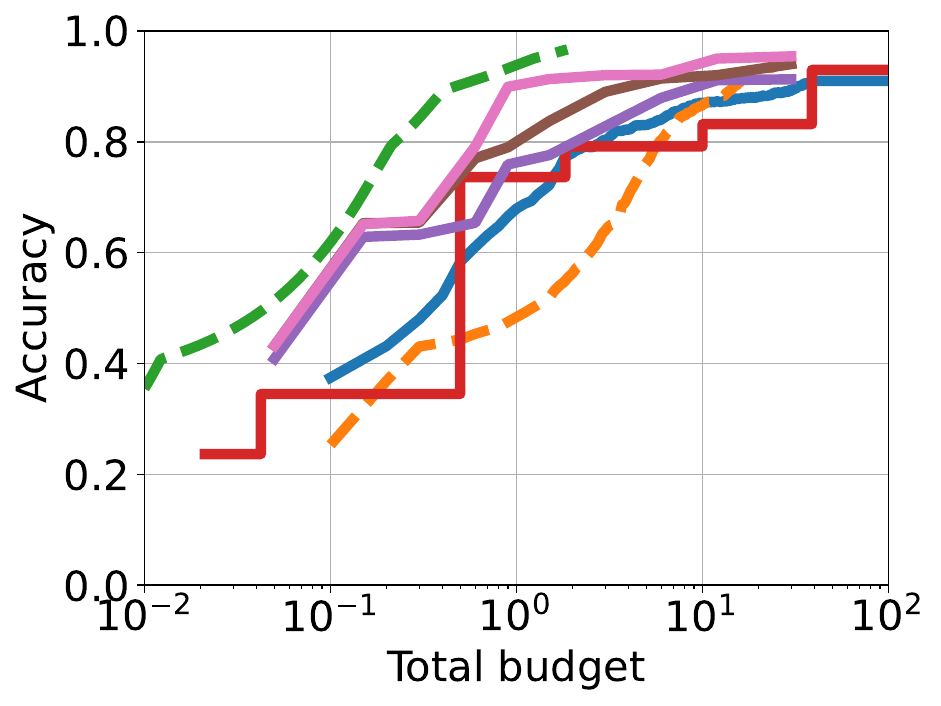}};
  \node at (0.8,-0.9) [scale=0.78]{GSM8K, $\beta =$500k};
\end{tikzpicture}
\end{subfigure}
\begin{subfigure}[b]{0.32\textwidth}
    \centering
	\begin{tikzpicture}
		\node at (0,0) [scale=0.30]{\includegraphics{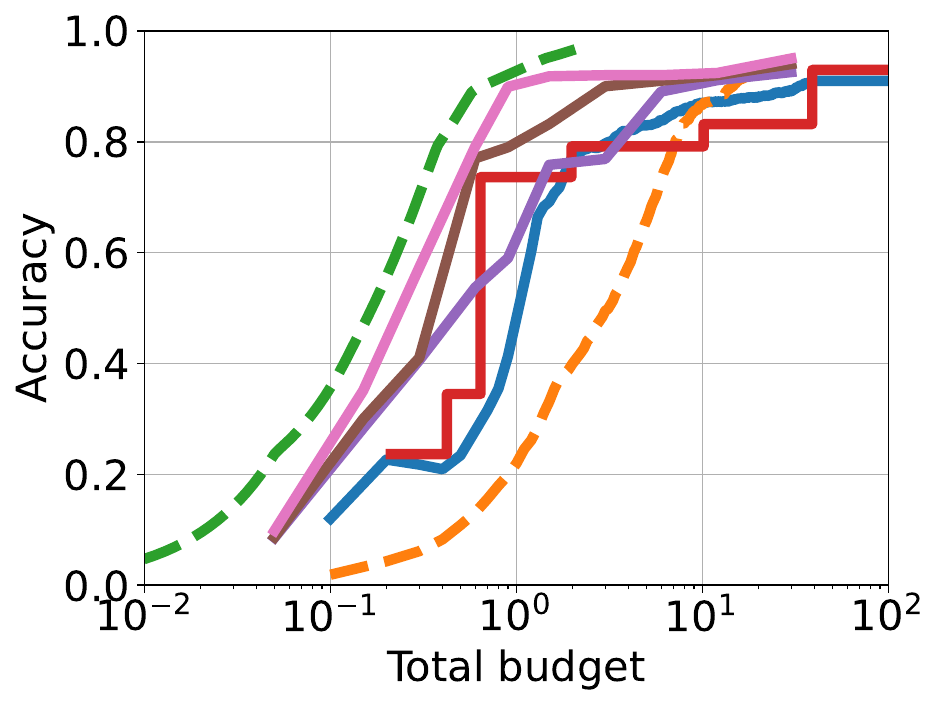}};
  \node at (0.95,-0.9) [scale=0.78] {GSM8K, $\beta =$50k};
\end{tikzpicture}
\end{subfigure}\\
\begin{subfigure}[b]{1\textwidth}
    \centering
	\begin{tikzpicture}
		\node at (0,0) [scale=0.38]{\includegraphics{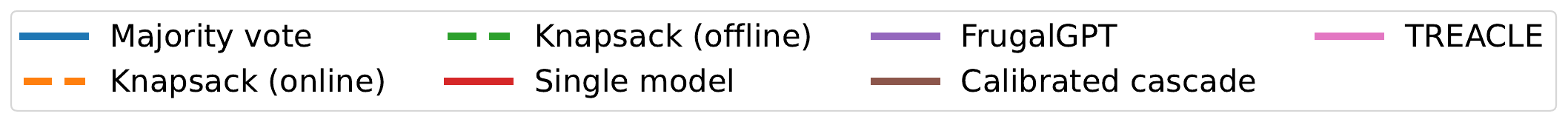}};
\end{tikzpicture}
\end{subfigure}
\vspace{-10pt}
\caption{The performance of various methods for different cost functions and budget constraints. The dashed lines are methods that have ground knowledge, which is impractical but illustrates the best achievable performance. 
}
\label{fig:all_12_new}
\vspace{-10pt}
\end{figure*}


To evaluate the performance of $\alg$, we conduct experiments for different total budgets and cost functions.
The results are presented in \Cref{fig:all_12_new} (additional results on CSQA and LLC are in \Cref{app:more_dataset}.)
Across different settings of $\alpha$, $\beta$, and total budget, \alg 
consistently outperforms the baselines and is close to the \OfflineKnapsack 
-- an approach not feasible in practical deployments.
We note that the relatively good performance of the Calibrated Cascade is due to it using the same state vector we designed for \alg. 
We make the following additional observations. 


$\bullet$ \textbf{\emph{Observation 1: \alg can adapt to different budgets and cost parameters.}} All the results in \Cref{fig:all_12_new}, with different budgets and $\alpha,\beta$ parameters were produced after training \alg only once.
This highlights \alg's adaptability to different cost function variations.

$\bullet$ \textbf{\emph{Observation 2: For limited budgets, \alg only answers questions that are more likely to produce accurate responses.}}
For example, for $\beta = 50\text{k}$ in \Cref{fig:all_12_new}, when the budget is only $\$0.05$ and insufficient for all queries, 52.7\% of the questions \alg chooses to answer are correct.
For context, the cheapest model (Llama-2-7b) can only answer 23.65\% of questions correctly.
This suggests \alg can evaluate question difficulty and opt not to respond to some questions. 


\begin{figure}[]
\centering
\begin{subfigure}[b]{0.48\textwidth}
    \centering
	\begin{tikzpicture}
		\node at (0,0) []{\includegraphics[width=1\textwidth]{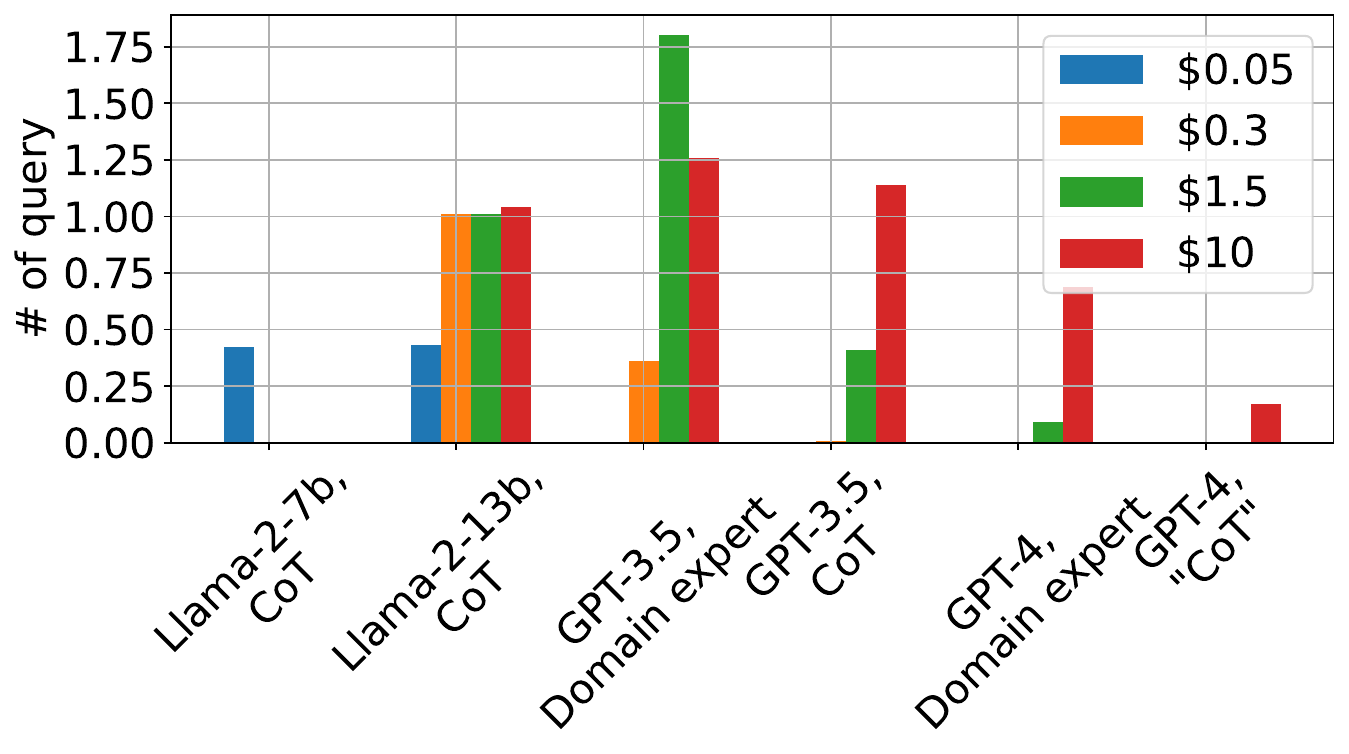}};
\end{tikzpicture}
\vspace{-20pt}
\caption{Varying budget with $\alpha=\frac{1}{20}$}
\vspace{-3pt}
\label{fig:Budget_query}
\end{subfigure}
\begin{subfigure}[b]{0.50\textwidth}
    \centering
	\begin{tikzpicture}
		\node at (0,0) []{\includegraphics[width=1\textwidth]{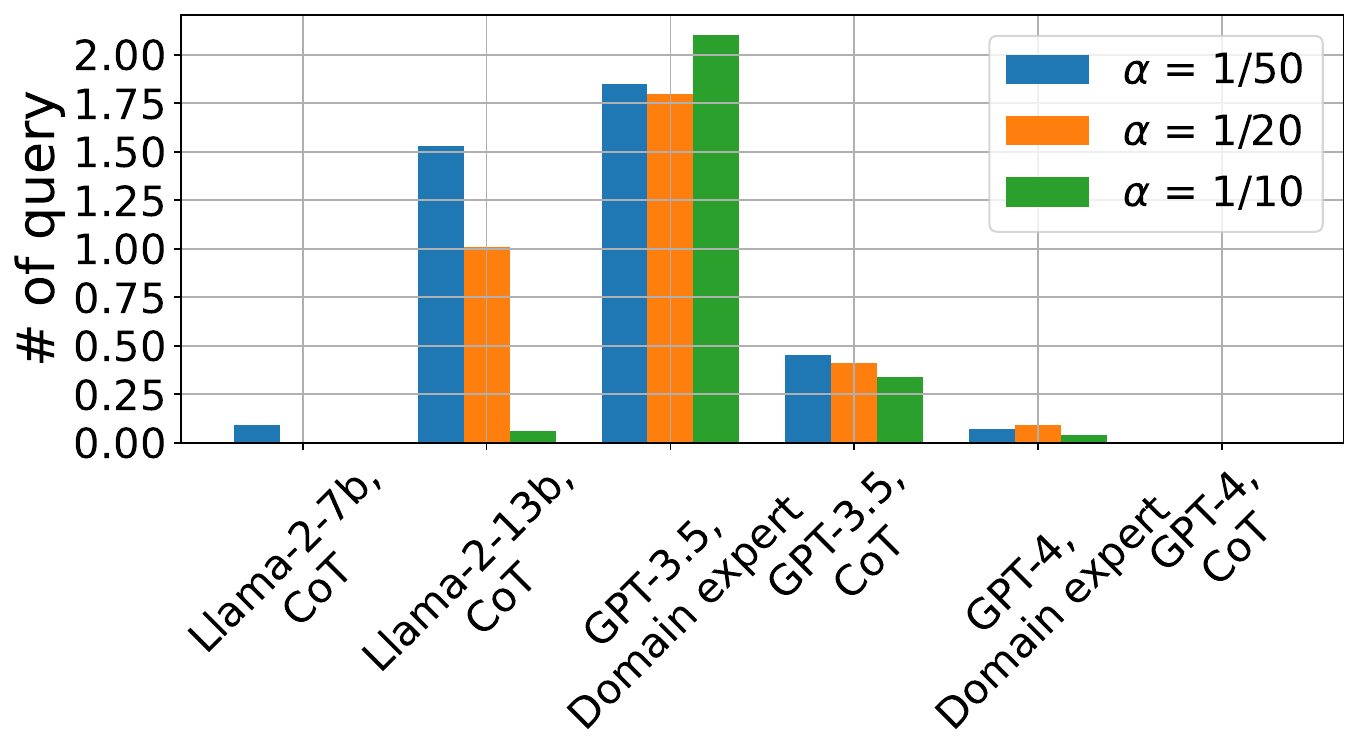}};
\end{tikzpicture}
\vspace{-20pt}
\caption{Varying $\alpha$ with \$1.5 budget. 
}\vspace{-5pt}
\label{fig:Budget_query_alpha}
\end{subfigure}
\caption{Number of times each model is re-queried.}
\vspace{-15pt}
\end{figure}



$\bullet$ \textbf{\emph{Observation 3: For larger budgets, $\alg$ chooses more powerful (LLM, prompt) combinations.}} 
This is shown in \Cref{fig:Budget_query},
where as the budget increases, the more powerful models (right side of x-axis) are increasingly selected.
Interestingly, we observe that for budgets \$0.3 to \$10, the Llama-2-13b model is queried approximately once per question, despite its suboptimal performance. 
Even with these larger budgets, it's still beneficial to query Llama before moving onto more powerful models, to see whether its responses are consistent. 

\begin{wrapfigure}{r}{0.4\textwidth}
\vspace{-5pt}
\centering
\begin{minipage}{\linewidth}
    \centering
    \begin{tikzpicture}
		\node at (0,0) []{\includegraphics[width=\textwidth]{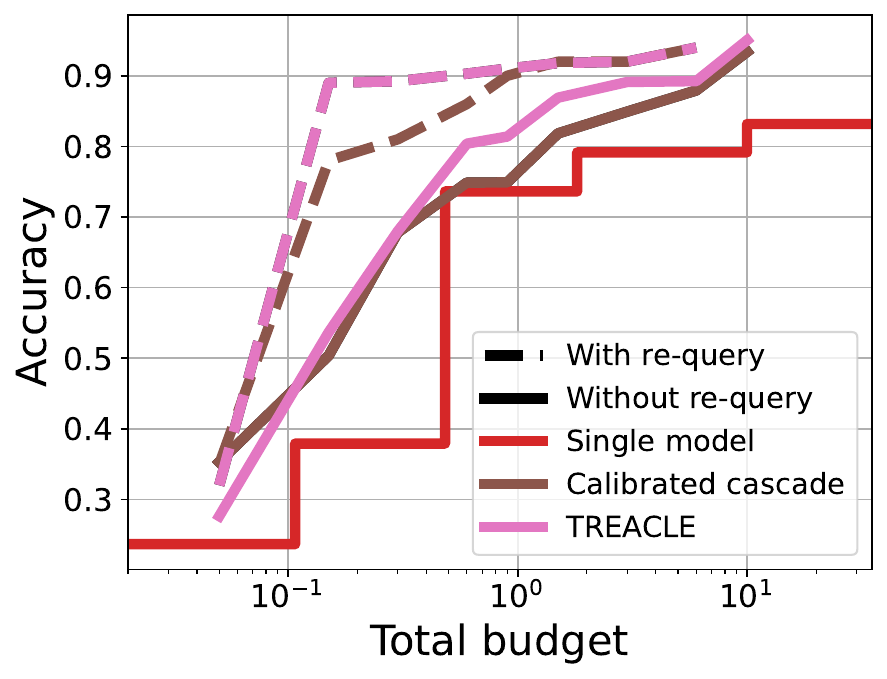}};
    \end{tikzpicture}
    \vspace{-20pt}
    \caption{With and without re-querying. $\alpha=\frac{1}{20}$.}
    \vspace{-10pt}
    \label{fig:requery}
\end{minipage}
\end{wrapfigure}
$\bullet$ \textbf{\emph{Observation 4: Re-querying helps.}} 
We trained both \alg and \algU baseline without the ability to re-query. The results are shown in \Cref{fig:requery}, where the dashed line represents method variants that permits re-querying. 
We observed a notable decrease in accuracy when re-querying was not allowed. Methods without re-querying eventually achieved comparable accuracy with those with re-querying capability, but with significantly larger budgets.

$\bullet$ \textbf{\emph{Observation 5: \alg's choice of model and prompt is impacted by relative LLM prices.
}}
As the relative cost of Llama models decreases ($\alpha$ decreases), \alg increasingly utilizes Llama to answer queries, allowing for cost savings, as shown in \Cref{fig:Budget_query_alpha}. This shift enables use of more expensive models like GPT-4 when tackling complex problems, thereby enhancing overall accuracy. When Llama becomes more expensive, \alg no longer chooses it. This aligns with our intuition that using Llama to verify response consistency becomes less economical.

\subsubsection{Addition of new LLMs}
\label{sec:new_llm}
\begin{wrapfigure}{r}{0.6\textwidth}
\centering
\vspace{-15pt}
\begin{minipage}{\linewidth}
    \centering
    \begin{tikzpicture}
		\node at (0,0) []{\includegraphics[width=1.05\textwidth]{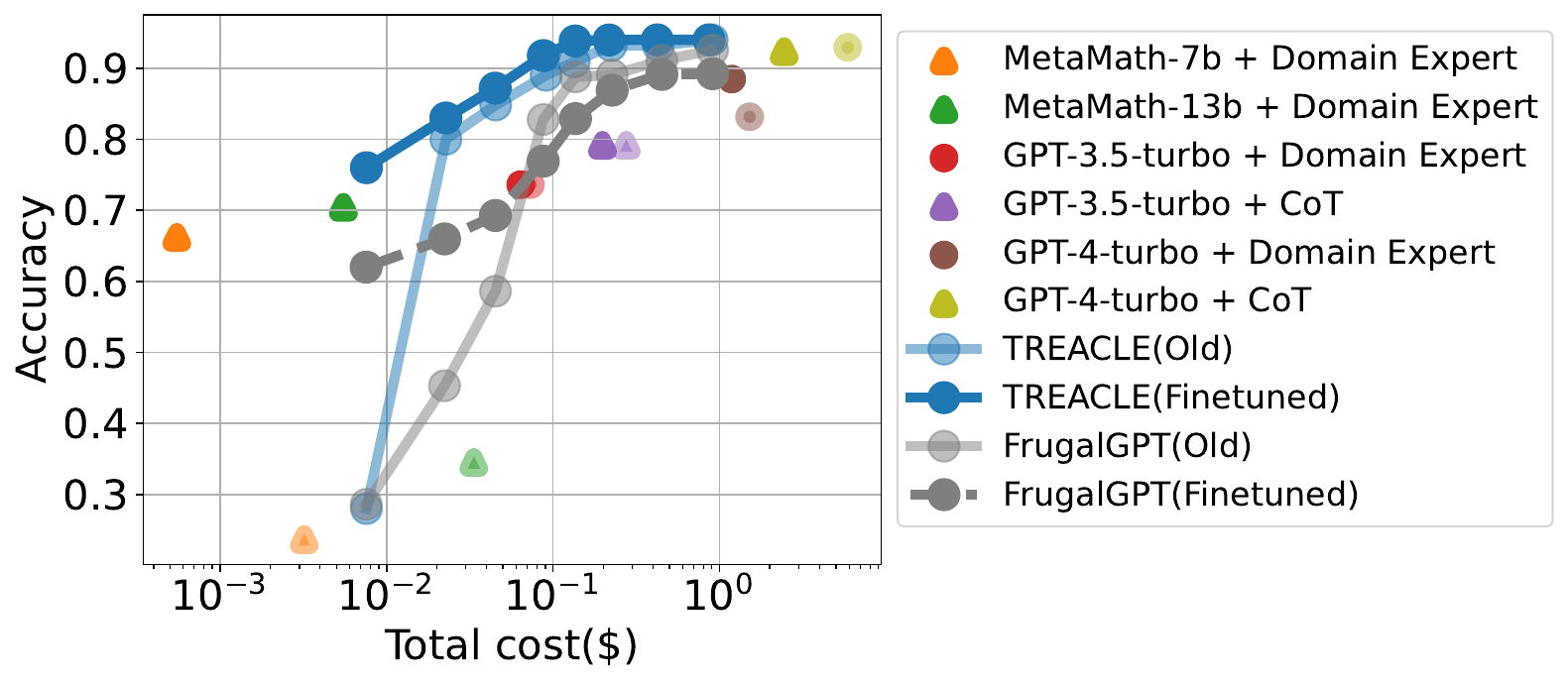}};
    \end{tikzpicture}
    \vspace{-20pt}
    \caption{Performance with new LLMs and lowered prices. 
Lines and dots in light (dark) colors are results with old (new) prices and LLMs. $\alpha=\frac{1}{10}$.}\vspace{-8pt}
    \label{fig:clean-full-new}
\end{minipage}
\end{wrapfigure}


LLM development is rapid, with better models continuously emerging, and the API prices set by providers can change at any time. \alg's ability to 
react to such changes is thus an important practical consideration. We show that \alg can adapt by fine-tuning itself using few samples.
 We study two types of LLM updates.
(1) \emph{API price adjustment:
} In November 2023, OpenAI released GPT-4-turbo, offering performance on par with GPT-4 but at a more affordable price. Concurrently, the price for GPT-3.5-turbo was lowered. 
(2) \emph{Fine-tuned open-source LLMs:} Several domain-specific fine-tuned models with higher accuracy have been released.
Specifically, we exchanged Llama-2 for MetaMath~\cite{yu2023metamath}, which is fine-tuned specifically for GSM8K.
For both scenarios, we partitioned the GSM8K test data into 80\% validation and 20\% test samples, generated new state-action trajectories from the validation set, then fine-tuned \alg on these new trajectories.
To create a comparable baseline, we also fine-tuned FrugalGPT's DistilBERT. 

Firstly, we show the performance of \alg with both the API price adjustments and improved LLMs in \Cref{fig:clean-full-new}.
 The individual points on the plot illustrate the changes in the API prices for gpt-3.5-turbo.
 The lines show the performance of the new \alg with new models and prices and the old \alg (from previous subsections).
The new \alg can achieve the peak accuracy with only a \$1 budget, clearly benefiting from the new models and lowered prices. 
 Benefits are also significant for lower budgets, where the improved \alg has significantly higher accuracy, because the lowest performing Llama-2 models were replaced by fine-tuned Metamaths.
Finally, for FrugalGPT that relies on a fine-tuned DistilBERT accuracy estimator, 
performance didn't improve and can even degrade due to distribution shifts and overfitting.
\begin{wrapfigure}{r}{0.6\textwidth}
\centering
\vspace{-5pt}
\hspace{-5pt}
\begin{subfigure}[b]{0.29\textwidth}
    \centering
		\includegraphics[width=\textwidth]{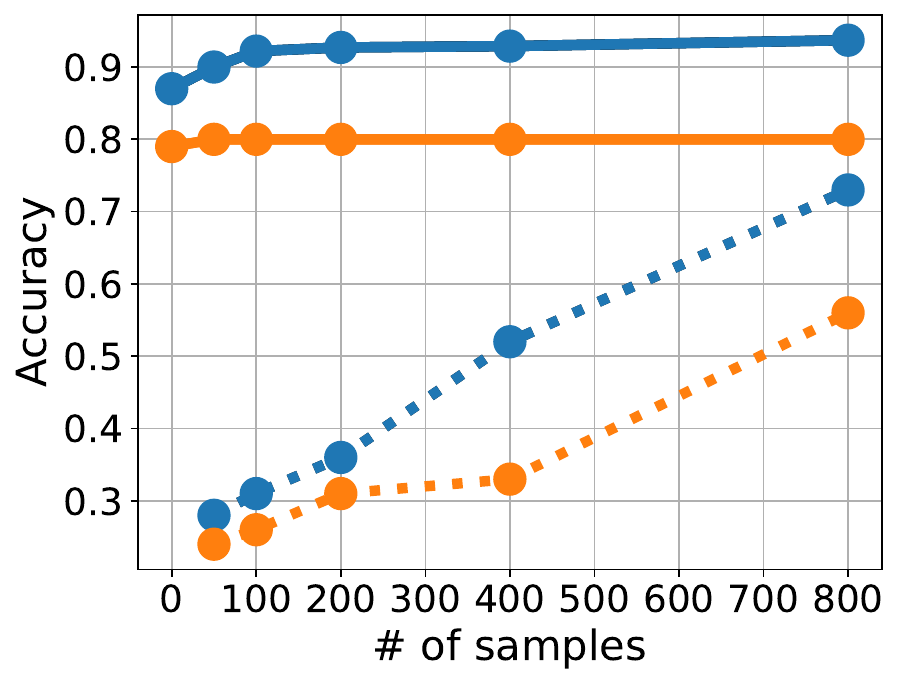}\vspace{-4pt}
\caption{New GPT models and prices (API price adjustment)}
\label{fig:finetune_sample_gpt}
\end{subfigure}
\hspace{10pt}
\begin{subfigure}[b]{0.28\textwidth}
    \centering
\includegraphics[width=\textwidth]{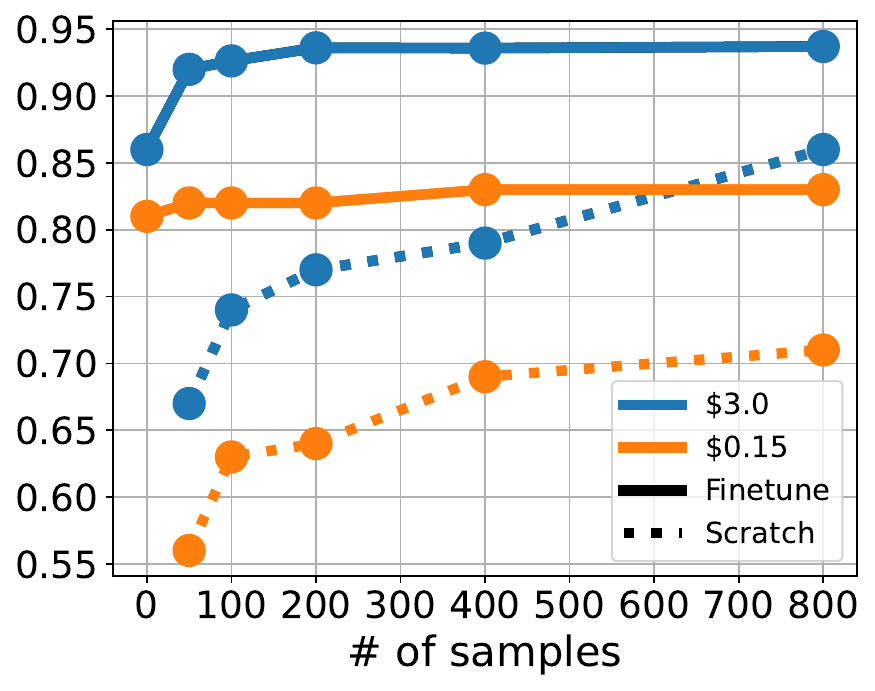}\vspace{-4pt}
\caption{\small{New Llama models (fine-tuned open-source LLMs) }}\label{fig:finetune_sample_llama}
\end{subfigure}
    \caption{Sample complexity for different budgets with new LLMs. $\alpha=\frac{1}{10}$.}\vspace{-5pt}
\label{fig:finetune_sample}
\end{wrapfigure}
 Secondly, in \Cref{fig:finetune_sample} we investigate the sample efficiency of fine-tuning the model with new API prices and LLMs (``Fine-tune'' in the figure) compared to training \alg from scratch with the new prices and LLMs (``Scratch''). The sample efficiency is important because it can be expensive to collect query-response pairs from new LLMs to further train \alg. The results indicate that when there are minor changes to the available LLMs, deploying the previously trained \alg can be sufficient.
 For instance, in \Cref{fig:finetune_sample_gpt} 
 when there is limited budget (\$0.15) and upgrades to the expensive models, deploying the previously trained \alg (\# samples = 0)
 achieves comparable performance to the fine-tuning \alg (\# samples = 800). 
 On the other hand, when upgrades are introduced to cheaper models (\Cref{fig:finetune_sample_llama}), deploying the old \alg may initially result in poor accuracy, but \alg can quickly adapt to the new LLM options by fine-tuning with a few number of samples (around 300).

\subsubsection{Shifts in Question Difficulty}
\label{sec:domain_shift}

Thus far in the evaluations, easier and harder questions were randomly mixed throughout the training and test sets.
In practice, question difficulty may not be uniformly distributed, 
so we study two types of difficulty distribution shifts: shifts across the training/test sets, and towards the end of the test set.

\begin{figure*}[]
\centering
\begin{minipage}{\textwidth}
\subcaptionbox{\small{Hard}\vspace{-8pt}}{
\includegraphics[width=0.26\textwidth]{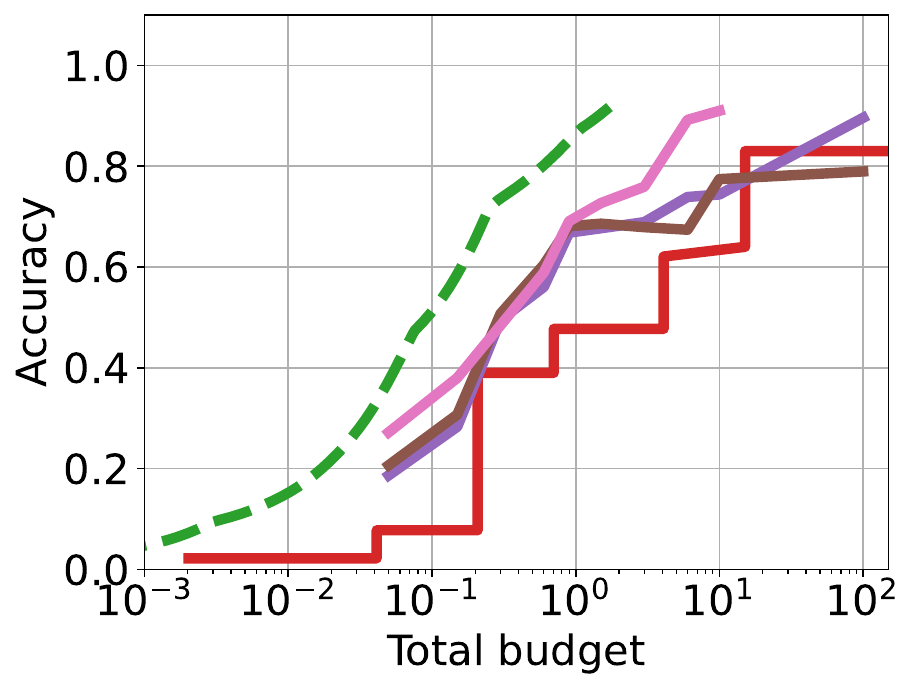}
\vspace{-4pt}
}
\subcaptionbox{\small{Easy}\vspace{-8pt}}{
\includegraphics[width=0.25\textwidth]{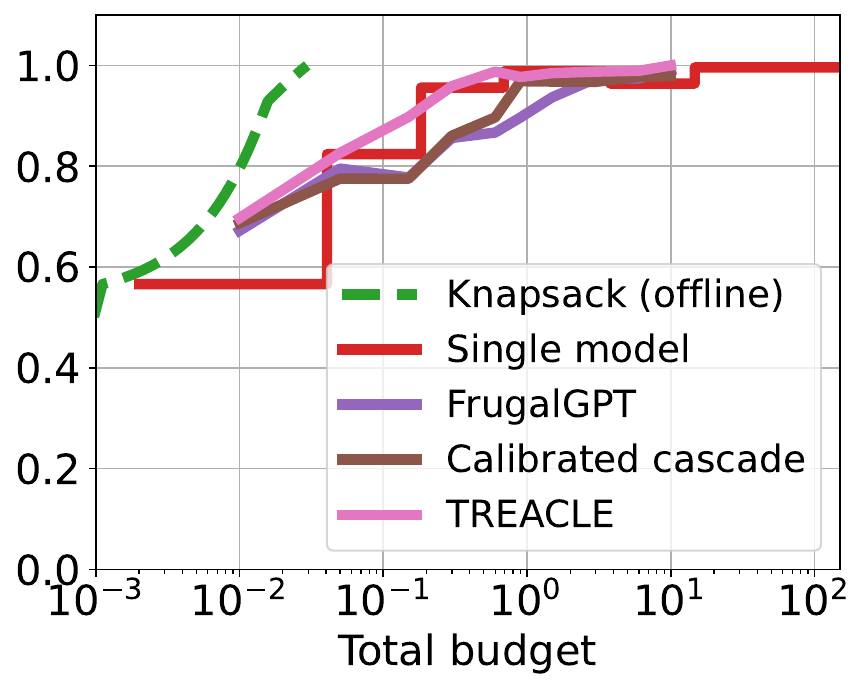}
\vspace{-4pt}
}
\subcaptionbox{\small{Statistics for budget = \$1.\vspace{-4pt}\label{table:easy_hard_stats}}}{
\tiny
\vspace{10pt}
\resizebox{0.46\textwidth}{!}{%
\begin{tabular}{|l|lll|lll|}
\hline
\multirow{2}{*}{} & \multicolumn{3}{p{1.5cm}|}{\# of questions unanswered}                         & \multicolumn{3}{l|}{Budget spent (\$)}                         \\ \cline{2-7} 
                  & \multicolumn{1}{p{0.5cm}|}{All (/1319)} & \multicolumn{1}{p{0.5cm}|}{Easy (/500)}   & \multicolumn{1}{p{0.5cm}|}{Hard (/500)}& \multicolumn{1}{p{0.5cm}|}{All} & \multicolumn{1}{l|}{Easy}  & Hard \\ \hline
{\sf \scriptsize TREACLE}               & \multicolumn{1}{l|}{1} & \multicolumn{1}{l|}{0} & 2  & \multicolumn{1}{l|}{0.97}  & \multicolumn{1}{l|}{0.84} & 0.99 \\ \hline
Single Model      & \multicolumn{1}{l|}{2} & \multicolumn{1}{l|}{0} & 0  & \multicolumn{1}{l|}{0.69}  & \multicolumn{1}{l|}{0.69} & 0.70 \\ \hline
Cal. Cascade           & \multicolumn{1}{l|}{2} & \multicolumn{1}{l|}{0} & 26 & \multicolumn{1}{l|}{0.931}  & \multicolumn{1}{l|}{0.702} & 0.99 \\ \hline
FrugalGPT         & \multicolumn{1}{l|}{3} & \multicolumn{1}{l|}{0} & 31 & \multicolumn{1}{l|}{0.96}  & \multicolumn{1}{l|}{0.713} & 0.99 \\ \hline
\end{tabular}
}
}
\caption{Performance on ``easy'' and ``hard'' partitions of the test set. Models are trained on original training data, but must handle a distribution shift in difficulty during test. 
$\alpha=\frac{1}{20}$.}
\label{fig:diff}
\end{minipage}
\vspace{-0.2in}
\end{figure*}

\textbf{Difficulty shifts between training and test.}
We divided the GSM8K test set into ``hard'' and ``easy'' subsets based on the question difficulty.
The difficulty is defined by the number of LLM models correctly answering the questions (more models answering a question correctly roughly means it is easier). 
Basic performance on the easy and hard questions is shown in \Cref{table:easy_hard_stats}.
When the questions are hard, each question ends up consuming too much budget, leaving insufficient budget for subsequent questions that then go unanswered. 
The single model baseline does well in terms of cost and unanswered questions, but has low accuracy.
We plot the performance for variable budgets in \Cref{fig:diff}, and find that \alg's accuracy remains stable, no matter whether the test distribution shifts to an easier level or a harder level.
This is because \alg can dynamically adjust based on the remaining budget in online fashion.



\textbf{Difficulty shifts within the test set.}
To further evaluate the robustness to question difficulty shifts, we test \alg with the full test set sorted from easy-to-hard queries or hard-to-easy queries.
The hope is that
with the help of query text embedding in the state vector (which should capture some estimate of difficulty), \alg can remain relatively stable in terms of accuracy even if the ordering of the questions changes.
This hypothesis is borne out in \Cref{fig:reorder}, while \OnlineKnapsack performs significantly worse than \alg if the questions are sorted from hard to easy. This is because much of the budget is wasted on the difficult queries that arrive at the beginning.

\subsubsection{Different types of reasoning tasks}
\label{sec:datasets}



\begin{figure}
\centering
\begin{minipage}{0.33\textwidth}
 \centering
   \includegraphics[width=\textwidth]{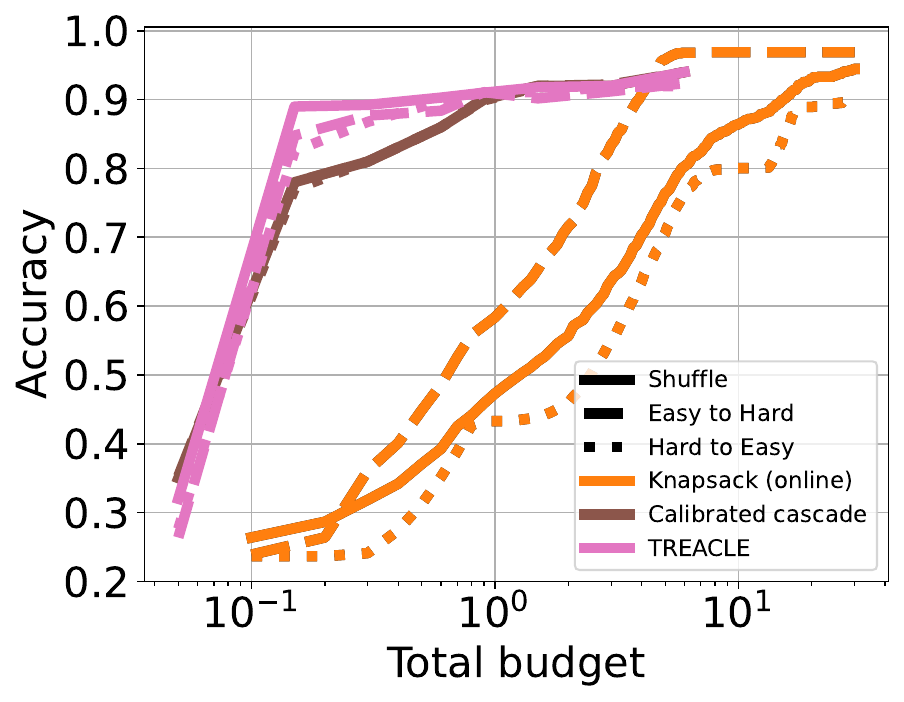}
 \caption{
 \alg is robust to re-ordered question difficulty in the test set. $\alpha=\frac{1}{20}$.}
 \label{fig:reorder}
 \end{minipage}
\begin{minipage}{0.66\textwidth}
\addtocounter{figure}{+1}
\begin{subfigure}{0.5\textwidth}
\includegraphics[width=\textwidth]{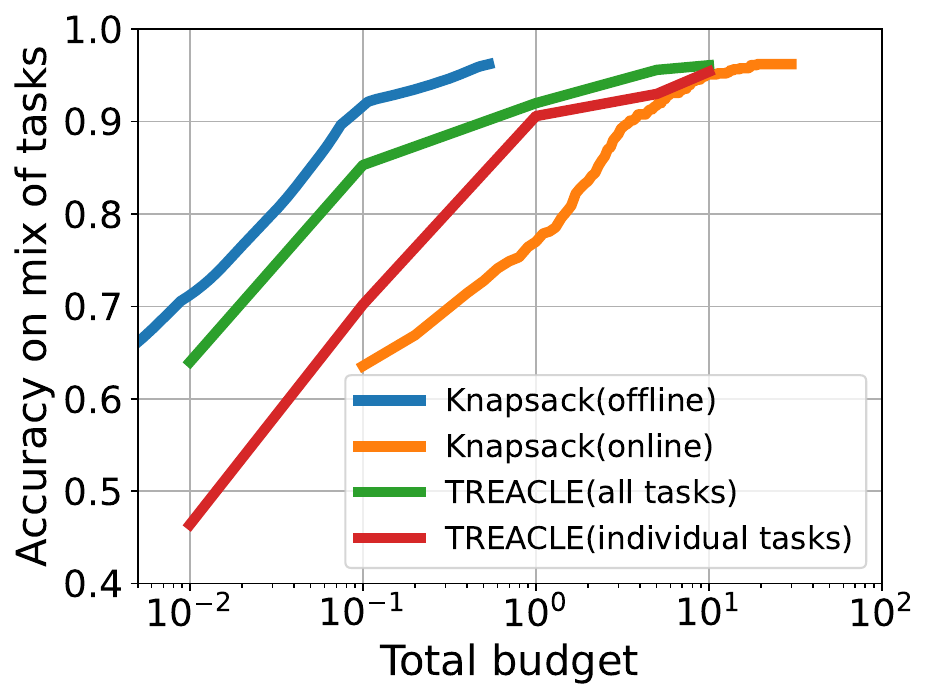}
\caption{\small{Mixture of tasks}}
\label{fig:mix}
\end{subfigure}
\begin{subfigure}{0.49\textwidth}
\includegraphics[width=\textwidth]{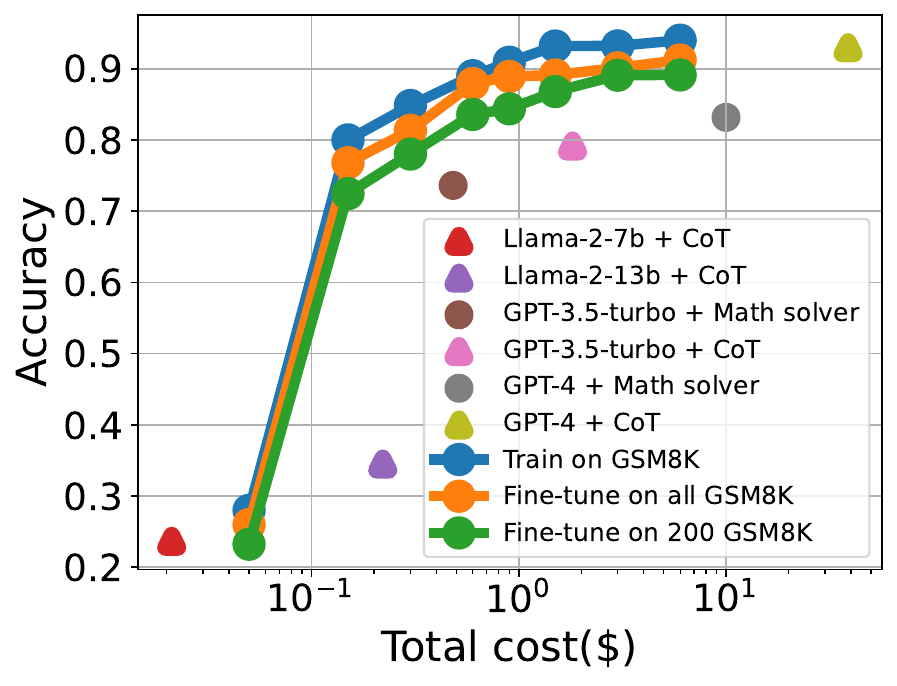}
\caption{Addition of new task}
\label{fig:finetune}
\end{subfigure}
\addtocounter{figure}{-1}
\caption{Performance with different types of reasoning tasks.}
\end{minipage}
\vspace{-0.2in}
\end{figure}

\textbf{Mixture of tasks.} We seek to examine whether one model can handle multiple types of tasks under one common budget (in contrast to the previous experiments with a specialized model for each task).
We trained a single model with all 3 datasets and recorded the test accuracy on those datasets.
The results shown in \Cref{fig:mix} for ``\alg (all tasks)'', offline knapsack, and online knapsack are the test accuracy from an equal mix of 
CSQA, GSM8K and LLC queries.
``\alg (individual tasks)'' is the test accuracy on the same mix of queries, using the models from previous subsections,
where each model (corresponding to a task) is assigned to 1/3 of the common budget.
``\alg (all tasks)'' can handle a mixture of tasks under a common budget (\eg outperforming online knapsack), and can significantly outperform the individual tasks baseline (``\alg (individual tasks)'') by effectively allocating its common budget across queries of different types.


\textbf{New unseen task type.} 
Consider the scenario where the model has not been trained on certain new tasks.
To show that \alg can adapt to new tasks easily, we performed additional experiments.
The base model is trained using the CSQA dataset, and the unseen new tasks are queries from GSM8K.
Interestingly, in our design, we decouple decision making from the task embedding as follows. To transfer from CSQA to GSM8K, we freeze the base RL policy of CSQA (the decision making part), and fine-tune the ``text embedding'' feature in the state vector (see bottom pink part of \Cref{fig:overview}).
How many samples are needed to fine-tune the text embedding for the new task?
As shown in \Cref{fig:finetune}, with a budget of 0.6, the original model fully-trained on GSM8K (``train on GSM8K'') achieves a test accuracy of 0.848, compared to 0.78 when trained on CSQA and fine-tuned with only 200 additional samples from GSM8K (``fine-tune on 200 GSM8K''). This highlights a relatively small accuracy loss when transferring to new types of unseen tasks.
The results suggest that our method can easily adapt to new tasks with only a small amount of additional training. 

\label{sec:new_dataset}

\vspace{-10pt}
\section{Conclusions}
\label{sec:conclusions}

\vspace{-5pt}
We propose \alg, a learning-based LLM querying framework that intelligently chooses between LLM and prompt combinations based on question context and past response history. 
Our experiments show that \alg outperforms other baselines and is robust to different budgets, LLM availability and prices, and so on.
For future work, we plan to incorporate 
other features such as privacy into the cost function.
We hope our framework can help spur cost-efficient utilization of LLM systems.

\textbf{Limitations.} Our work focuses on reasoning problems, and could be extended to generative problems by incorporating new measures of response consistency.
The RL policy's budget does not account for the cost of collecting the training data. We plan to freely release the datasets and code so that others can train the same basic policy and adapt that policy to new future LLMs and tasks (as shown through our experiments). 

\textbf{Broader impact.}
This work can make LLMs more accessible to cost-sensitive users.

\subsection*{Acknowledgements}
This work was supported in part by an Adobe Data Science Research award, NSF CCF-2046816, a gift from Google Research, and credits from the Microsoft Accelerating Foundation Models Research grant program. Thank you to Dr.~Koyel Mukherjee for helpful discussions and insights on this work.
\newpage
\bibliography{refs}
\bibliographystyle{plain}
\appendix
\section*{Appendix}

\section{Experiment Setup and Implementation Details}
\label{app:implement}

\subsection{Datasets} 

We use three representative datasets for the experiments.
\squishlist
    \item \textbf{GSM8K \cite{cobbe2021training}}: The Grade School Math 8K dataset contains 8.5K high quality grade school math problems created by human writers, in which 7.5K are in the training data and 1K are in the testing data. We further split the 7.5K training data into 6K training data and 1.5K validation data. 
    \item \textbf{CSQA \cite{saha2018complex}}: The Complex Sequential Question Answering dataset consists of 12102 multiple choice commonsense reasoning questions encountered in daily life. The training set, validation set, and testing set contain 9741, 1221 and 1140 samples respectively. 
    \item \textbf{LLC~\cite{wei2022chain}} 
    The Last Letter Concatenation task is to concatenate the last letters of words in a name (e.g., ``Amy Brown'' $\rightarrow$ ``yn''). 

\squishend

 \begin{wrapfigure}{r}{6cm}
 \vspace{-10pt}
  \includegraphics[width=0.39\textwidth]{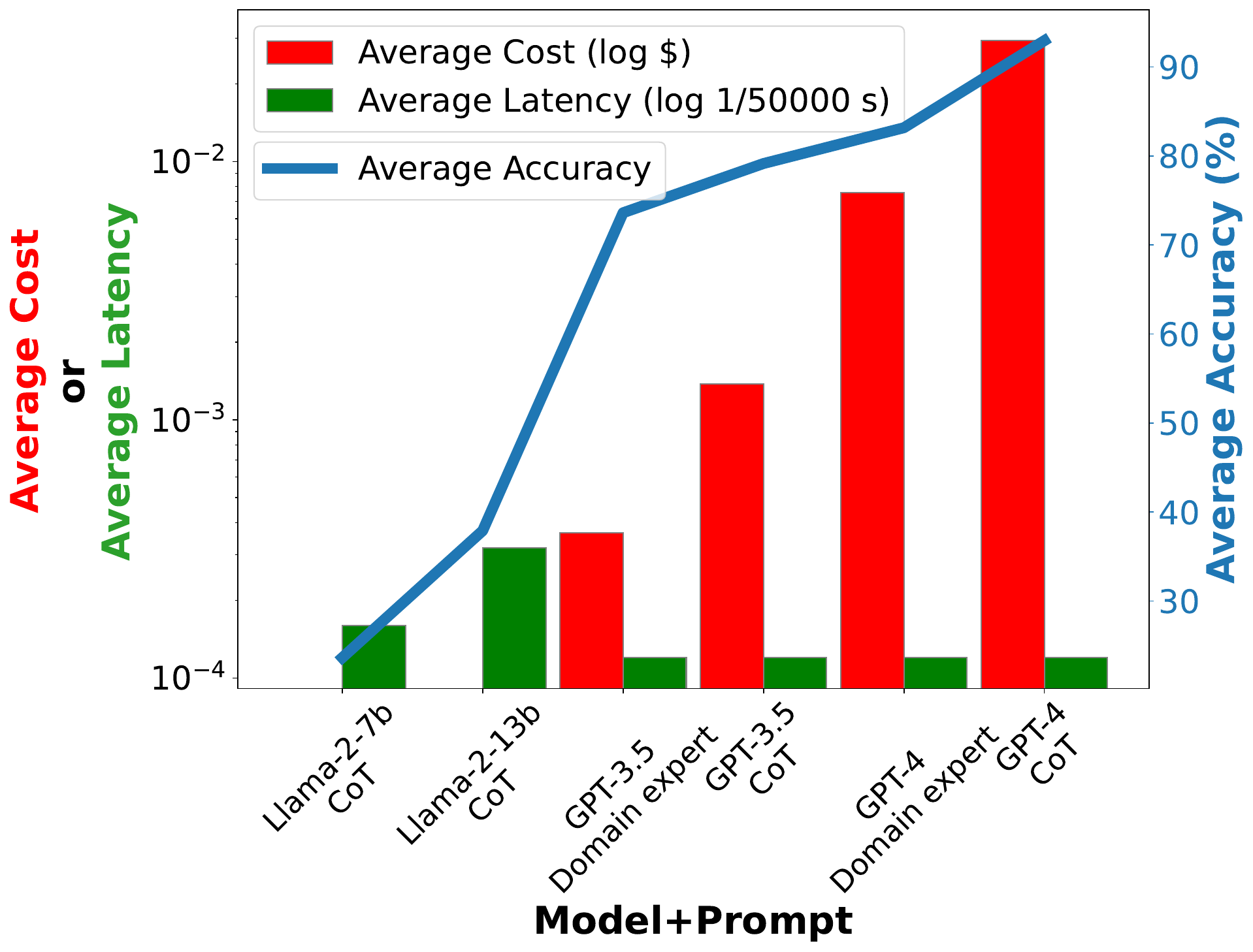}
 \caption{\small{Characterizing accuracy, cost, latency of different model-prompt pairs $\LMPP$ on the GSM8K test dataset. Higher accuracy corresponds to higher price or and lower latency.}} 
\label{fig:model_tradeoffs}
\vspace{-20pt}
\end{wrapfigure}

\subsection{Data collection and training}
\label{sec:data_collection_training}

To evaluate our methods, we perform two steps: (1) Collect query-response pairs for different combinations of LLMs and prompt, then (2) train \alg with these pairs.

\paragraph{(1) Collecting query-response pairs.}
We collect query-response pairs from each dataset for different combinations 
of LLM, prompt, and LLM temperature.
The accuracy, latency, and monetary price of the best combinations are shown in \Cref{fig:model_tradeoffs}, with full results in \Cref{tab: model performance} in the Appendix.
We selected those combinations according to \Cref{prop:llm_ordering}.

\vspace{-0.1cm}
\emph{\textbf{LLMs.}}
    We used 5 different LLMs: Llama-2-7b-chat, Llama-2-13b-chat \cite{touvron2023Llama}, GPT-3.5-turbo, GPT-4, and GPT-4-turbo~\cite{OpenAI2023GPT4TR}. These models are of varying sizes (7b, 13b, 154b and 1.76t respectively).
    The Llama models are open-source and run locally on our servers (one A40 GPU for Llama-2-7b and two A40 for Llama-2-13b), 
    while the GPT models rely on commercial APIs. 

    \vspace{-0.1cm}
    \emph{\textbf{Prompt types.}} We employ several prompting schemes to elicit the reasoning abilities of LLMs. 
    A prompt generally consists of two parts: the ``content message'' containing the question, and the ``system message'' with additional context. 
    
    \squishlist
         \item The \textbf{plain text prompt} 
         submits the questions to the LLM as the content message (no system message). 
        \item  The \textbf{domain expert prompt} feeds information about the question's domain as a system message (\eg ``math solver''), and keeping the user's content message as plain text. 
        
        \item The \textbf{standard few-shot prompt}  includes a system message (``Follow the given examples and answer the question'' \cite{wei2022chain}) and the content message, which consists of few-shot examples together with the plain text prompt. 
        It tends to improve response accuracy compared to the plain text prompt. 

        \item The \textbf{Chain-of-Thought (CoT) few-shot prompt}~\cite{wei2022chain} 
        adds some intermediate explanations to the few-shot examples. 
    \squishend
    \emph{\textbf{Temperature.}}  The LLM temperature is a configurable parameter that influences the variety of the responses it generates.
    With a higher temperature, the model may output more diverse but possibly inaccurate responses.
    We set the temperature to 0 for a new query, and to 0.8 or 1.0 for a re-query for Llama and GPT, respectively. 
    

\paragraph{(2) Training \alg.}
We used Deep Q-Network (DQN)~\cite{mnih2015human} to train the reinforcement learning (RL) policy in \alg, consisting of a two-layer neural network.
To generate diverse trajectories consisting of $(s_t,a_t,r_t,s_{t+1})$, we use the collected query-response data and employ $\epsilon$-greedy exploration.
For the monetary prices, we use the published per-token prices for the GPT models. 
Since our local Llama deployments do not have API costs, we set the Llama-2-7b's price as $\alpha$ times Llama-2-13b's price, and Llama-2-13b's price as $\alpha$ times
GPT-3.5-turbo's price.
$\alpha$ varies between $\frac{1}{10},\frac{1}{20}$ or $\frac{1}{50}$.
Our pricing is grounded in reality and similar to actual market rates, as the offered price for Llama is approximately 15\% of GPT-3.5-turbo according to current providers~\cite{togetherpricing}. 
For the latency-accuracy tradeoff, we evaluate different trade-off parameters $\beta=[50\text{k},500\text{k},1\text{M}]$ in the cost function.
We set the smallest $\beta$ to 50k because then the two terms in the cost function have similar order of magnitude. 
The details of the cost function values are provided in \Cref{tab:single model cost} in the Appendix.
Unless stated otherwise, we set $\lambda = 5$.

 During training, we used the Adam optimizer with a learning rate $1\times10^-4$, Huber loss as the loss function, and a batch size of 64. Our DQN has three layers with ReLU and softmax activations, and the size of the hidden layer is 128. We set $\lambda=5$ in the reward function.
 For re-queries, we set different temperature settings for Llama and GPT (0.8 and 1, respectively) because their ranges 
    are different ($[0,1]$ and $[0,2]$ respectively).
The actions are selected according to an $\epsilon$-greedy policy. Simply put, the actions are sometimes chosen by the DQN and sometimes sampled uniformly. The probability of choosing a random action starts at $\varepsilon_\text{START} = 0.9$ and decays exponentially towards $\varepsilon_\text{END} = 0.05$. For the reward decay, we use $\gamma=0.99$.
\subsection{Baselines}
\label{sec:baselines}

We evaluated the following baseline methods, reproducing the methods as faithfully as possible with a common set of LLMs and prompt options.
\squishlist
    \item \textbf{FrugalGPT}~\cite{chen2023frugalgpt}. We reproduce FrugalGPT, which uses a DistilBERT model~\cite{sanh2019distilbert} to estimate the response accuracy. If this estimate is below a threshold, the next LLM in the cascade is queried. 
    This baseline shows how \alg compares to the state-of-the-art that lacks re-querying.
    \item \textbf{Calibrated cascade.}
We build on FrugalGPT's response accuracy estimation and develop a 2-layer neural network, whose input is a state vector to \alg and whose output is the estimated response accuracy.
 If this estimate is below a threshold (tuned on the validation set), the next LLM in the cascade is queried. 
This baseline shows how \alg compares to an improved version of FrugalGPT.

    \item \textbf{\BMajority}. 
    For each query, we output the final response based on the majority vote from $N$ re-queries, based on \cite{wang2022self,yue2023large}.
    We set $N=2$ based on the best empirical results.
    The (LLM, prompt) combinations are progressively queried until their per-question budget 
    runs out.
    This baseline allows comparison with \alg's response consistency feature in the state vector.

        \item \textbf{Offline and online knapsack}. Given the cost of LLM responses and their accuracy, we formulate a multiple choice knapsack problem where the items are the $\LMPP$ combinations, the values are the correctness probabilities, and the costs are the latency and monetary price functions. 
    Solving this offline knapsack problem gives the optimal solution when re-queries are not allowed. 
    We also implement an online approximation algorithm~\cite{chakrabarty2008online}.
    These baselines show how \alg compares to methods with perfect knowledge of question costs and accuracy. 

    \item \textbf{Single model.} 
The (LLM, prompt) combinations are sorted by increasing cost and accuracy, then the most capable option that fits within the allocated budget is selected for all questions.
This baseline shows how \alg compares to a fixed single LLM and prompt.
\squishend

\section{Theoretical Justifications for the Cascade Strategy} \label{app:theory}

\subsection{How should the $\LMPP$ be ordered in the cascade?}
\label{subsec:cascade-ordering}
\noindent\textbf{Setup 1:} Suppose there are $K$ LLM-prompt pairs each with probability of correct answer $p_i$ and inference cost $c_i$ for $1\leq i\leq K$. During the cascade, we assume access to an oracle that tells when the answer of a model is incorrect so that we can move to the next model. The procedure continues until a correct answer is obtained. The goal is to minimize the expected cost of inference.

\begin{lemma} Suppose there are 2 models with probability of correct answers $p_1$ and $p_2$ and inference costs $c_1$ and $c_2$, respectively. Then, the optimal cascade rule is to first query the model with larger cost-normalized accuracy $\frac{p_i}{c_i}$.
\end{lemma}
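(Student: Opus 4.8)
The plan is to reduce the claim to a direct comparison of the two possible query orderings and to show that the stated condition is exactly the one making the first ordering cheaper in expectation. Because there are only two models, a full enumeration of outcomes is tractable and neither an induction nor a general exchange argument is needed; the whole proof comes down to writing two expected-cost expressions and comparing them.

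First I would write down the expected inference cost for the ordering that queries model $1$ before model $2$. Since the oracle reveals when an answer is incorrect, we always pay $c_1$ for the first query, and we incur the additional cost $c_2$ precisely when the first model is wrong, which happens with probability $1-p_1$. Hence the expected cost of this ordering is $E_{12} = c_1 + (1-p_1)c_2$. By the symmetric argument, reversing the order gives $E_{21} = c_2 + (1-p_2)c_1$.

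Next I would subtract the two expressions. The common terms cancel and the difference simplifies to $E_{21} - E_{12} = p_1 c_2 - p_2 c_1$. Therefore $E_{12} \le E_{21}$ if and only if $p_1 c_2 \ge p_2 c_1$, equivalently $p_1/c_1 \ge p_2/c_2$. In words, it is optimal to query first the model with the larger cost-normalized accuracy, which is exactly the claim.

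The only point requiring care — rather than a genuine obstacle — is the cost bookkeeping under the oracle assumption: one must check that the outcome in which both models fail contributes the same total cost $c_1 + c_2$ to both orderings, so that it cancels in the comparison and does not affect the conclusion. Once the two expected-cost expressions are written correctly, the result is an immediate algebraic inequality. I would also note that this same ratio $p_i/c_i$ is precisely what motivates sorting the $\LMPP$ combinations for action $a_3$ in the main algorithm, tying the lemma back to the design choice it is meant to justify.
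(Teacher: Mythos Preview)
Your proof is correct and follows essentially the same approach as the paper: both compute the expected cost of each of the two orderings and compare them, arriving at the same inequality $p_1 c_2 \ge p_2 c_1 \iff p_1/c_1 \ge p_2/c_2$. The only cosmetic difference is that the paper writes the expected cost in the form $c_1+c_2 - c_1 c_2\,(p_i/c_i)$ so the minimizer is read off directly, whereas you subtract $E_{12}$ and $E_{21}$; these are the same computation.
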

\begin{proof} The cascade terminates when the first correct answer is obtained. The expected inference cost if we query model 1 first is $C_1 = c_1p_1 + (c_1+c_2)(1-p_1)$, and the expected inference cost if we query model 2 first is $C_2 = c_2 p_2 + (c_1 + c_2) (1-p_2)$. Note that the accuracy is independent of the cascade order thanks to the oracle. 
It can be easily shown that $C_1 \leq C_2 \iff \frac{p_1}{c_1} \leq \frac{p_2}{c_2}$.
\end{proof}

\begin{repproposition}{prop:llm_ordering} With $K$ (LLM, prompt) options, each with probability of correct answer $p_k$ and cost $c_k$,
ordering the options according to their cost-normalized accuracies $\frac{p_k}{c_k}$ minimizes the total cost.
\end{repproposition}
\begin{proof} Suppose the (LLM, prompt) options are not ordered in terms of $\frac{p_k}{c_k}$. We will prove that swapping the order results in a better cascade. Again recall that the accuracy is independent of the cost because we stop as soon as oracle confirms the answer. The expected accuracy is equal to the probability of at least one (LLM, prompt) option being correct. To proceed, suppose (LLM, prompt) options are not ordered and there is some $\kappa$ such that $\frac{p_\kappa}{c_\kappa} < \frac{p_{\kappa+1}}{c_{\kappa+1}}$. Let us compute the expected change in inference cost when we flip their order.

The change in inference cost arises from the scenarios where (LLM, prompt) option $\kappa$ will be used but $\kappa+1$ will not be, and vice versa. Define $q_{\kappa-1}=\prod_{i=1}^{\kappa-1}(1-p_i)$ which is the probability that the first $\kappa-1$ (LLM, prompt) options fails. The probability of the ``excess cost'' ($ec$) associated with (LLM, prompt) option $\kappa$ and then $\kappa+1$ is
\[ 
ec_\kappa= q_\kappa c_\kappa+q_\kappa(1-p_\kappa)c_{\kappa+1}.
\]
This is because if the first $\kappa-1$ (LLM, prompt) options fail, we definitely pay $c_\kappa$ and only pay $c_{\kappa+1}$ if (LLM, prompt) option $\kappa$ fails. This ``excess cost'' definition does not reflect the (LLM, prompt) options strictly before $\kappa$ or strictly after $\kappa+1$. This is because the expected cost of other (LLM, prompt) options arise from the symmetric events with respect to (LLM, prompt) options $\kappa$ and $\kappa+1$ (either problem is already solved by the time we reach $\kappa$, or both $\kappa$ and $\kappa+1$ failed).

Conversely, the excess cost associated with $\kappa+1$ is (\ie using $\kappa+1$ first and then $\kappa$)
\[ 
ec_{\kappa+1}= q_\kappa c_{\kappa+1}+q_\kappa (1-p_{\kappa+1})c_\kappa.
\] 
To proceed, observe that if $\frac{p_\kappa}{c_\kappa} < \frac{p_{\kappa+1}}{c_{\kappa+1}}$, then we would be better off by switching the models because  
\begin{align}
ec_{\kappa+1}<ec_{\kappa}&\iff q_\kappa c_{\kappa+1}+q_\kappa(1-p_{\kappa+1})c_\kappa<q_\kappa c_\kappa+q_\kappa(1-p_\kappa)c_{\kappa+1}\\
& \iff c_{\kappa+1}+(1-p_{\kappa+1})c_\kappa<c_\kappa+(1-p_\kappa)c_{\kappa+1}\\
& \iff p_{\kappa}c_{\kappa+1}<p_{\kappa+1}c_\kappa\\
& \iff p_\kappa/c_\kappa<p_{\kappa+1}/c_{\kappa+1}.
\end{align}
\end{proof}
Experimentally, we verify that our models used \alg are sorted according to the decreasing cost-normalized accuracies.
The model order (using the purely monetary cost function with $\alpha = \frac{1}{10}$) is shown in \Cref{fig:cost_app}. With other $\alpha$ values, Llama-2-13b will be cheaper, so the cost-normalized accuracy will always be decreasing.
\begin{figure}[t]
\begin{center}
\includegraphics[scale=0.45]{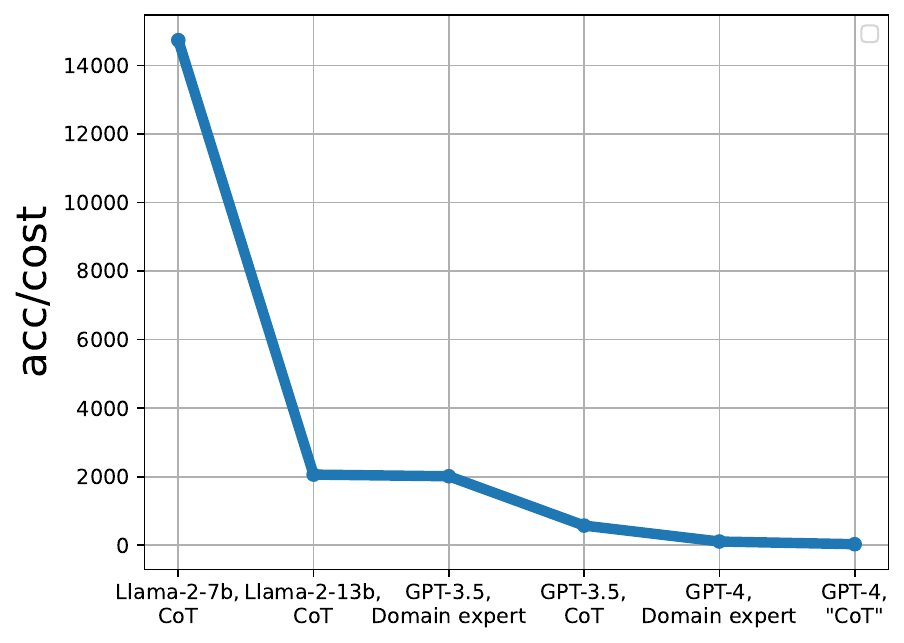}
\end{center}
\vspace{-3mm}
\caption{Cost-normalized accuracy
}
\label{fig:cost_app}
\vspace{-15pt}
\end{figure}

\subsection{Do policies that consider response consistency perform well?}


While \Cref{prop:llm_ordering} provides the optimal ordering rule of LLMs in the cascade, understanding the properties of the optimal trajectory on the cascade is more challenging. To understand the optimal decision rule, we study when a policy terminates for a given question. Given a question, suppose the policy issues up to $\Omega$ queries of the same question to reach a response, where $1\leq j\leq \Omega$. 
Experimentally, we have found that the majority of the response statistics are 2-consistent, that is, the policy stops once it observes the same answer twice. Specifically, in our GSM8K experiments with a budget of \$0.30 and $\alpha=\frac{1}{20}$, only 6.98\% of the responses achieve consistency greater than 2. This percentage increases modestly to 14.32\% when the budget is raised to \$10 (a factor of 33$\times$). 
In this section, we develop a theoretical explanation for this observation by characterizing of the performance of 2-consistent policies and establishing a formal proof of \Cref{prop:accuracy_loss_2_consistent} . 
The theory relies on \Cref{def:second_moment}, which bounds how likely the wrong answers are to be repeated.

\begin{repproposition}{prop:accuracy_loss_2_consistent} For the problem stated in  \Cref{eqn:rl_problem} that achieves $C_*$, the optimal expected accuracy subject to budget constraints,
there exists a 2-consistent policy that achieves an accuracy of at least $C_*-\frac{1}{2}\Omega^2\epsilon$. 
\end{repproposition}


\begin{proof} The proof will be achieved via a reduction. Let $\pi_*$ be the optimal policy that achieves an expected accuracy of $C_*$. We now derive a new policy $\pi_2$ from $\pi_*$. Specifically, let $\pi_2$ be same as $\pi_*$ if the trajectory does not attain 2-consistency i.e.~the answers are not repeated during the cascade. Otherwise, we let $\pi_2$ terminate early once $2$-consistency is achieved for the first time. By construction $\pi_2$ will satisfy the budget constraints as it queries the LLMs strictly less or equal to $\pi_*$. Let $\hat{Y}_i(\pi)$
denote the final (stochastic) answer of a policy $\pi$ given problem $Q_i$. 
Recall that $Y_i$ 
denotes the correct answer. Let the random variable $C(\pi_2)$ be the total accuracy of $\pi_2$ summed over all queries where randomness arises from the stochasticity of the LLM responses as well as the policy $\pi_*$. We can lower bound the accuracy of $\pi_2$ as follows
\begin{align}
\E[C(\pi_2)]&=\sum_{i=1}^n \Pro(\hat{Y}_i(\pi_2)=Y_i\bgl Q_i)\\
&\geq \sum_{i=1}^n \Pro(\hat{Y}_i(\pi_2)=Y_i\bgl Q_i) -\sum_{i=1}^n \Pro(\hat{Y}_i(\pi_2)\neq \hat{Y}_i(\pi_*)\bgl Q_i)\\
&=C_*-\sum_{i=1}^n \Pro(\hat{Y}_i(\pi_2)\neq \hat{Y}_i(\pi_*)\bgl Q_i)\\
&=C_*-\sum_{i=1}^n \Pro(\text{First 2-consistent answer is wrong}\bgl Q_i)\\
&\geq C_*- \sum_{i=1}^n \Pro(\text{Any wrong 2-consistent answer within $\Omega$ responses}\bgl Q_i) \label{eqn:2-cons}.
\end{align}
For example, if the responses are (5, 10, 3, 5, 10) to a given question $\Pro(\text{First 2-consistent answer is wrong})$ is the probability that ``5'' is the wrong answer, and $\Pro(\text{Wrong 2-consistent answer within $\Omega$ responses})$ is the probability that either ``5'' or ``10'' is a wrong answer.
To proceed, we will upper bound the right hand side of (\ref{eqn:2-cons}). Let $W_{any}$ be the event $\text{``Any wrong 2-consistent answer within $\Omega$ responses''}$. Similarly let $W_{uv}$ be the event that the  $\LMPP_u$ and $\LMPP_v$ return the same incorrect answer. Note that, for $W_{\text{any}}$ to happen, a $W_{uv}$ event has to happen. Thus, through union bound across $\Omega$ queries given $Q_i$, we have that 
\[ 
\Pro(W_{\text{any}}|Q_i)\leq \sum_{u<v}\Pro(W_{uv}|Q_i).
\]
Plugging in the definition of $W_{uv}$, we obtain
\begin{align}
    \Pro(W_{\text{any}}|Q_i)&\leq \sum_{u<v}\Pro(W_{uv}|Q_i)\\
    &=\sum_{u<v} \sum_{\hat{O}\neq Y_i} \Pro(M_u(P_u(Q_i))=\hat{O})\Pro(M_v(P_v(Q_i))=\hat{O})\\
    &\leq \sum_{u<v} \sum_{\hat{O}\neq Y_i} \frac{\Pro(M_u(P_u(Q_i))=\hat{O})^2+\Pro(M_v(P_v(Q_i))=\hat{O})^2}{2}\\
    &\leq \sum_{\hat{O}\neq Y_i} \frac{\Omega^2}{2}\sup_{1\leq u\leq K}\Pro(M_u(P_u(Q_i))=\hat{O})^2.
\end{align}
To finalize, we sum over all $n$ questions and use the definition of $\eps$ to obtain 
\[ 
\sum_{i=1}^n\Pro(W_{\text{any}}|Q_i)\leq \frac{\Omega^2}{2}\sum_{i=1}^n\sup_{1\leq u\leq K}\sum_{\hat{O}\neq Y_i} \Pro(M_u(P_u(Q_i))=\hat{O})^2= \frac{\Omega^2}{2}\eps.
\] 
Plugging the right hand side in \eqref{eqn:2-cons}, we conclude with the advertised theorem statement.
\end{proof}

\textbf{Further analysis of 2-consistent policies as a function of incorrect answer likelihoods}. 
To provide further intuition on \Cref{def:second_moment}, let us denote the probability of 1-shot correct answer of a model by $p_{cor}=\Pro(m_i(\qb)=a^*)$. Small $p_{cor}$ implies that individual LLM responses are unreliable. However, if $\eps$ is relatively small, 2-consistency should return the correct answer with high probability because the majority of the 2-consistent instances will arise from the correct answers. Formalizing this intuition, below we upper-bound the likelihood of error conditioned on 2-consistency as $\order{\eps/p_{cor}^2}$. 

\begin{proposition}
Recall that $((M+P)_k)_{k=1}^K$ are the LLM-prompt pairs used in the cascade. To obtain a more refined bound in terms of the probabilities of correct and wrong answers, let $p_k=\Pro(M_k(P_k(Q))=Y)$ for $1\leq k\leq K$. Also set $p_{\min}=\min_{1\leq k\leq K}p_k>0$ and $p_{\max}=\max_{1\leq k\leq K}p_k$. Let us suppose these hold uniformly over all problem instances $Q$. As $\eps\rightarrow 0$, over the 2-consistent cascade instances, the probability of error is upper bounded by 
\begin{align}
\text{err}_{\text{upper}}=(1+o(1))\frac{\sum_{k=2}^{K}(1-p_{\max})^{-2}(k-1) \mathbf{Q}_k\eps}{\sum_{k=2}^{K} \mathbf{P}_k}.\label{refined err bound}
\end{align}
Here $\mathbf{P}_k$ is the likelihood that $2$-consistent correct answer will first be achieved at the $k$'th model and $Q_k$ is the likelihood that first $k-1$ models fail to achieve 2-consistency. $o(1)$ term captures the higher order terms that arise from the probability of the 2-consistent incorrect cascades (which is vanishing compared to $\mathbf{P}_k,\mathbf{Q}_k$). 
\end{proposition}
\begin{proof}
In \eqref{refined err bound}, the ``$(1-p_{\max})^{-2}\eps(k-1)\mathbf{Q}_{k}$'' term in the numerator upper bounds the likelihood that, an incorrect $2$-consistent answer will be generated precisely at the $k$th model under \ref{def:second_moment}. 

Denote the probability of the event ``correct answer never appears until model $k-1$'' by $F_k=\prod_{j<k}(1-p_j)$. As $\eps\rightarrow0$, for $2\leq k\le K$, these take the simplified closed forms 
\begin{align}
\mathbf{P}_k&=p_k F_k\sum_{j<k}\frac{p_j}{1-p_j}=p_k\sum_{j<k}p_j\prod_{l<k, l\neq j}(1-p_l) \\ 
\mathbf{Q}_k&=F_k(1+\sum_{j<k}\frac{p_j}{1-p_j})=\sum_{j\le k}p_j\prod_{l< k, l\neq j}(1-p_l)+\prod_{j< k}(1-p_j).
\end{align}
For a fixed $k$, let us set the ratio 
\begin{align}
    \text{rat}_k=\frac{\mathbf{Q}_k}{\mathbf{P}_k}\leq \frac{1+\sum_{j<k}\frac{p_j}{1-p_j}}{p_k\sum_{j<k}\frac{p_j}{1-p_j}}=\frac{1+M}{p_kM}
\end{align} 
where $M=\sum_{j<k}\frac{p_j}{1-p_j}$. If $M\geq 1$, we have that $\text{rat}_k\leq \frac{2M}{p_kM}\leq \frac{2}{p_k}$. If $M\leq 1$, we have that $\text{rat}_k\leq \frac{2}{p_kM}\leq \frac{2}{p_k\sum_{j<k}p_j}$. Thus, we obtain 
\begin{align}
\frac{\mathbf{Q}_k}{\mathbf{P}_k}\leq \frac{2}{p_k\cdot\min(1,\sum_{j<k}p_j)}
\end{align}
This in turn implies that the error is controlled by the worst case ratio 
$(k-1)\mathbf{Q}_k/\mathbf{P}_k$ given by $\max_{k\leq K} \frac{2(k-1)}{p_k\cdot\min(1,\sum_{j<k}p_j)}$. Using $p_k\cdot\min(1,\sum_{j<k}p_j)\geq \min((k-1)p_{\min}^2,p_{\min})\geq p_{\min}^2$, we obtain the conclusion that 
\begin{align}
\text{err}_{\text{upper}}\leq (2+o(1))\frac{(1-p_{\max})^{-2}\eps}{\min(p_{\min}^2,p_{\min}/(K-1))}\leq (2+o(1))\frac{(K-1)\eps}{(1-p_{\max})^2p_{\min}^2}\propto \frac{(K-1)\eps}{p_{\min}^2}
\end{align}
\end{proof}


\section{Additional Results}
\label{app:additional results}
\subsection{Time-Varying API Query Latency}
\begin{figure}[]
\centering
    \centering
	\begin{tikzpicture}
		\node at (0,0) [scale=0.25]{\includegraphics{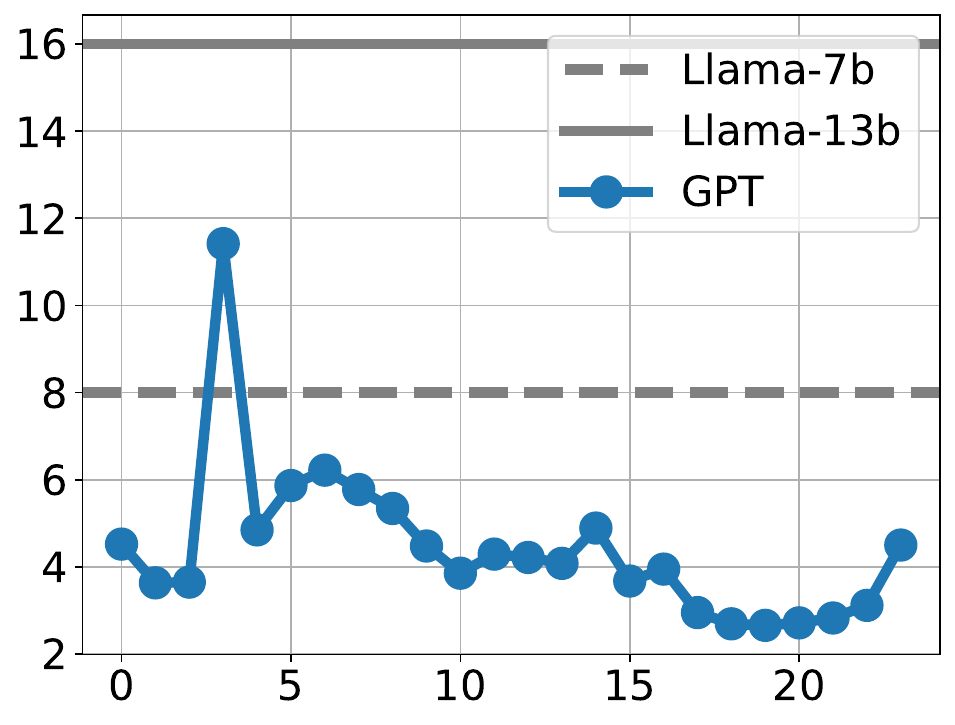}};
  \node at (0,-1.55) [scale=0.8] {o'clock};
  \node at (-2.15 ,0) [scale=0.8,rotate=90] {Latency (s)};
\end{tikzpicture}
\vspace{-7pt}
\caption{End-to-end latency of querying LLM models over a 24-hour period.}\vspace{-13pt}
\label{fig:Latency_Time}
\end{figure}
\label{sec:latency}

\begin{table}[]
\scriptsize
    \centering
    \scriptsize
    \caption{Performance with time-varying API query latency.}
     \label{tab:changing_latency}
     \small
     \scriptsize
    \begin{tabular}{|p{1.4cm}|p{2cm}|p{2cm}|p{1.1cm}|}
\hline
   Method   & Accuracy with Time-varying Latency & Accuracy assuming Constant Latency    & Update time (s)\\ \hline
  {\sf \scriptsize TREACLE}   & 86.4          &    76.1  & 0.02 \\ \hline
  Calibrated  & \multirow{2}{*}{80.1}          &   \multirow{2}{*}{75.7} &  \multirow{2}{*}{617.30} \\ 
  Cascade & & & \\\hline
    \end{tabular}
\end{table}
The latency of querying LLM APIs (such as OpenAI's GPT models) may vary over the short term time based on network congestion or LLM inference time. 
To showcase this, we recorded traces of the API latency (including communication and communication latency) over a 24-hour period. The measurements are shown in \Cref{fig:Latency_Time}. 
We also modified \alg and the experimental setup slightly.
Each hour, \alg attempts to answer the entire GSM8K test set with a budget of \$0.6,
using the historical average latency from the previous hour to update the per-query cost in the denominator of $\mathcal{B}_k$. The budget resets every hour.
We compare this to the vanilla \alg shown previously, which 
has the same total budget and
uses a fixed latency in the denominator of $\mathcal{B}_k$ for the entire 24-h period; for example, for GPT models, it assumes a fixed average latency of 6 s.
The results in \Cref{tab:changing_latency} indicate that \alg that adapts to time-varying achieves higher accuracy than the version that assumes constant latency. The update time to calculate the new $\mathcal{B}_k$ each hour is also minimal, at 20 ms.

\subsection{Model and Prompt Characterization}\label{app:Characterization}
We show the model details, prompt strategies, temperature, and various details of each configuration in \Cref{tab: model performance} Note that these are the model-prompt combinations chosen in our framework because of the evaluation in \Cref{tab:single model cost}.
\begin{table*}[ht]
    \centering
    \caption{Characterization of LLM performance in terms of accuracy, latency, and price, for a single query 
    with temperature equal to 0. 
    The Llama models do not have a direct monetary price because they are open-source and we run them locally.}
    \label{tab: model performance}
    \begin{subtable}[h]{0.95\textwidth}
    \scriptsize
    \centering
    \vspace{-0.1in}
    \caption{GSM8K dataset}
    \begin{tabular}{c|c|cc|c|cc}
    \hline
        \multirow{2}{*}{Model} & \multirow{2}{*}{Prompt} & \multicolumn{2}{c|}{Accuracy (\%)} & \multirow{2}{*}{Avg Latency (sec/query)} & \multicolumn{2}{c}{Avg Monetary Price (\$/query)}\\
         & & train & test & & train & test\\\hline\hline
        Llama-2-7b-chat & CoT few-shot & 23.36 & 23.65 & 8 & n/a & n/a \\\hline
        Llama-2-13b-chat & CoT few-shot & 37.90 & 37.91 & 16 & n/a & n/a \\\hline
        MetaMath-7b & domain expert & 92.48 & 66.19 & 8 & n/a & n/a \\\hline
        MetaMath-13b & domain expert & 92.81 & 70.43 & 16 & n/a & n/a \\\hline
        \multirow{2}{*}{GPT-3.5-turbo (old)} & domain expert & 76.60 & 73.62 & 6 & $3.83\times10^{-4}$ & $3.66\times10^{-4}$\\
                                       & CoT few-shot & 82.00 & 79.15 & 6 & $1.37\times10^{-3}$ & $1.38\times10^{-3}$\\\hline
        \multirow{2}{*}{GPT-3.5-turbo (new)} & domain expert & 76.60 & 73.62 & 6 & $3.38\times10^{-4}$ & $3.20\times10^{-4}$\\
                                       & CoT few-shot & 82.00 & 79.15 & 6 & $9.87\times10^{-4}$ & $9.90\times10^{-4}$\\\hline
        \multirow{2}{*}{GPT-4-turbo} & domain expert & 88.18 & 88.48 & 6 & $5.88\times10^{-3}$ & $5.91\times10^{-3}$\\
                                       & CoT few-shot & 92.61 & 92.34 & 6 & $1.21\times10^{-2}$ & $1.22\times10^{-2}$\\\hline
        \multirow{2}{*}{GPT-4} & domain expert & 84.33 & 83.17 & 6 & $7.30\times10^{-3}$ & $7.57\times10^{-3}$\\
                                       & CoT few-shot & 93.59 & 92.95 & 6 & $2.92\times10^{-2}$ & $2.94\times10^{-2}$\\\hline
    \end{tabular}
    \label{tab:GSM8K model performance}
    \end{subtable}
    
    \begin{subtable}[h]{0.95\textwidth}
    \scriptsize
    \centering
    \vspace{0.05in}
    \caption{CSQA dataset}
    \begin{tabular}{c|c|cc|c|cc}
    \hline
        \multirow{2}{*}{Model} & \multirow{2}{*}{Prompt} & \multicolumn{2}{c|}{Accuracy (\%)} & \multirow{2}{*}{Avg Latency (sec/query)} & \multicolumn{2}{c}{Avg Monetary Price (\$/query)}\\
         & & train & test & & train & test\\\hline\hline
        Llama-2-7b-chat & CoT few-shot & 64.72 & 67.65 & 16 & n/a & n/a \\\hline
        Llama-2-13b-chat & CoT few-shot & 68.19 & 71.17 & 31 & n/a & n/a \\\hline
        GPT-3.5-turbo & standard few-shot & 74.09 & 76.82 & 0.3 & $6.02\times10^{-4}$ & $6.02\times10^{-4}$\\\hline
        GPT-4 & standard few-shot & 84.29 & 87.14 & 0.7 & $1.20\times10^{-2}$ & $1.20\times10^{-2}$\\\hline
    \end{tabular}
    \end{subtable}

    \begin{subtable}[h]{0.95\textwidth}
    \scriptsize
    \centering
    \vspace{0.05in}
    \caption{LLC dataset}
    \begin{tabular}{c|c|cc|c|cc}
    \hline
        \multirow{2}{*}{Model} & \multirow{2}{*}{Prompt} & \multicolumn{2}{c|}{Accuracy (\%)} & \multirow{2}{*}{Avg Latency (sec/query)} & \multicolumn{2}{c}{Avg Monetary Price (\$/query)}\\
         & & train & test & & train & test\\\hline\hline
        Llama-2-7b-chat & CoT few-shot & 44.23\% & 44.6\% & 16 & n/a & n/a \\\hline
        GPT-3.5-turbo & plain text & 62.71\% & 63.20\% & 0.3 & $1.20\times10^{-4}$ & $1.14\times10^{-4}$ \\\hline
        GPT-3.5-turbo & CoT few-shot & 86.53\% & 87.13\% & 0.3 & $5.83\times10^{-4}$ & $5.82\times10^{-4}$\\\hline
        GPT-4 & CoT few-shot & 92.68\% & 93.2\% & 0.7 & $1.29\times10^{-2}$ & $1.29\times10^{-2}$\\\hline
    \end{tabular}
    \end{subtable}
\end{table*}



\subsection{Different types of reasoning tasks}

To visualize the differences between the three reasoning datasets, in \Cref{fig:pie} we plotted the fraction of questions where the most powerful (LLM, prompt) combination in the sorted list correctly answered the question (the ``in order'' pie slice), versus those questions where a less powerful combination succeeded and a more powerful combination failed (all other slices of the pie).
Interestingly for all datasets, there are minority cases where less powerful LLMs (the smaller pieces of the pie) can answer the question correctly.
Such cases are most prevalent in the GSM8K dataset and least prevalent in LLC, possibly because the math questions of GSM8K are more difficult.
Despite these dataset differences, \alg still chooses the right (LLM, prompt) combination to achieve higher accuracy in all datasets than the baselines.

\begin{figure}[]
\centering
\begin{subfigure}[b]{0.45\textwidth}
    \centering
	\begin{tikzpicture}
		\node at (0,0) [scale=0.4]{\includegraphics{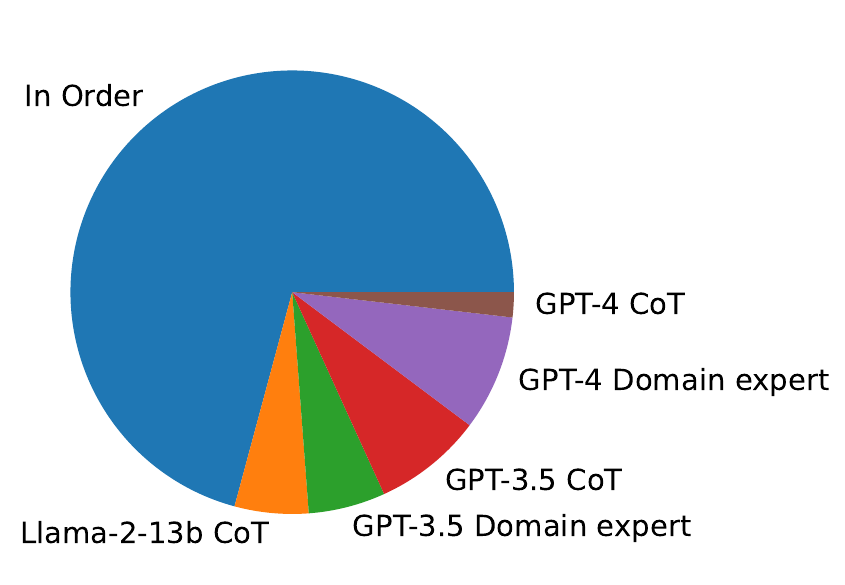}};
    \node at (-0.5,-2) [scale=0.8] {GSM8K};
\end{tikzpicture}
\end{subfigure}
\begin{subfigure}[b]{0.45\textwidth}
    \centering
	\begin{tikzpicture}
		\node at (0,0) [scale=0.4]{\includegraphics{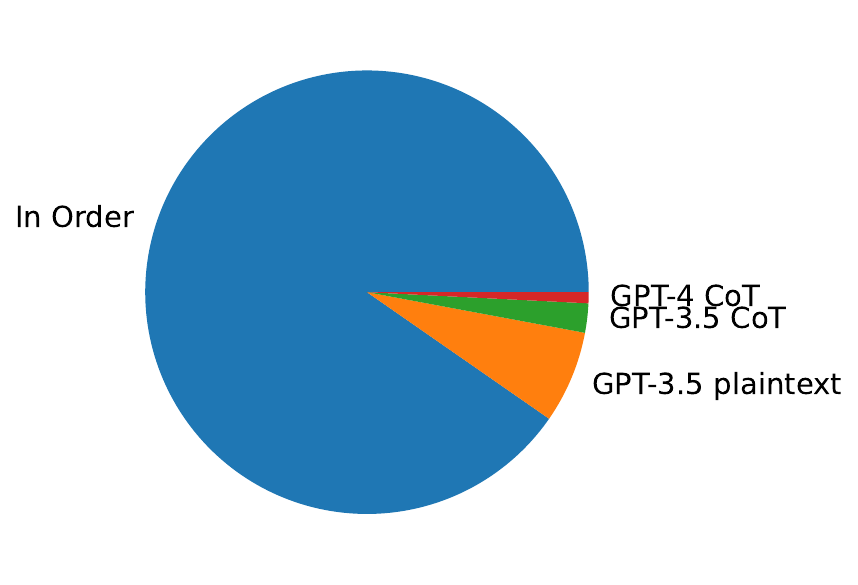}};
  \node at (-0.5,-1.8) [scale=0.8] {LLC};
\end{tikzpicture}
\end{subfigure}
\vspace{-5pt}
\caption{Fraction of questions that are solved by (LLM, prompt) combinations ordered from least to most powerful (``in order''). Minority slices are queries where less powerful combinations correctly answered.
}\vspace{-10pt}
\label{fig:pie}
\end{figure}

\subsection{Different cost function parameters and datasets}
\label{app:more_dataset}
Providers may adjust the per-token API prices, or the user may value latency and monetary price differently.
Therefore, we conducted experiments using different settings of the $\alpha$ (defined in \Cref{sec:data_collection_training})and $\beta$ (defined in \Cref{sec:problem}) parameters in the cost function.
In \Cref{fig:all_12_app}, the cost ratio $\alpha$ increases from left to right, and hence the cost difference between more powerful (GPT) and weaker (Llama) models gradually decreases according to the definition.
Under different pricing policies, $\alg$ consistently achieves better performance than the online baselines. In other words, a single $\alg$ model can easily accommodate varying budget requirements and cost functions, since it was trained under heterogeneous parameter settings.

Also, we mainly show GSM8K results in the main paper, because of limited space.
Across the additional datasets in \Cref{fig:all_12_app} (CSQA and LLC), the results consistently show good performance.

\begin{figure*}[]
\centering
\vspace{-10pt}
\begin{subfigure}[b]{0.26\textwidth}
    \centering
	\begin{tikzpicture}
		\node at (0,0) [scale=0.24]{\includegraphics{acc_50.pdf}};
  \node at (0,-1.5) [scale=0.5] {GSM8K, $\alpha = \frac{1}{50}$};
\end{tikzpicture}
\end{subfigure}
\begin{subfigure}[b]{0.26\textwidth}
    \centering
	\begin{tikzpicture}
		\node at (0,0) [scale=0.24]{\includegraphics{acc_20.pdf}};
  \node at (0,-1.5) [scale=0.5] {GSM8K, $\alpha = \frac{1}{20}$};
\end{tikzpicture}\label{fig:ece}
\end{subfigure}
\begin{subfigure}[b]{0.26\textwidth}
    \centering
	\begin{tikzpicture}
		\node at (0,0) [scale=0.24]{\includegraphics{acc_10.pdf}};
  \node at (0,-1.5) [scale=0.5] {GSM8K, $\alpha = \frac{1}{10}$};
\end{tikzpicture}
\end{subfigure}\\
\begin{subfigure}[b]{0.26\textwidth}
    \centering
	\begin{tikzpicture}
		\node at (0,0) [scale=0.24]{\includegraphics{acc_tradeoff-1000000.pdf}};
  \node at (0,-1.5) [scale=0.5] {GSM8K, $\beta $=1M};
\end{tikzpicture}
\end{subfigure}
\begin{subfigure}[b]{0.26\textwidth}
    \centering
	\begin{tikzpicture}
		\node at (0,0) [scale=0.24]{\includegraphics{acc_tradeoff-500000.pdf}};
  \node at (0,-1.5) [scale=0.5]{GSM8K, $\beta =$500k};
\end{tikzpicture}
\end{subfigure}
\begin{subfigure}[b]{0.26\textwidth}
    \centering
	\begin{tikzpicture}
		\node at (0,0) [scale=0.24]{\includegraphics{acc_tradeoff-50000.pdf}};
  \node at (0,-1.5) [scale=0.5] {GSM8K, $\beta =$50k};
\end{tikzpicture}
\end{subfigure}\\
\begin{subfigure}[b]{0.26\textwidth}
    \centering
	\begin{tikzpicture}
		\node at (0,0) [scale=0.24]{\includegraphics{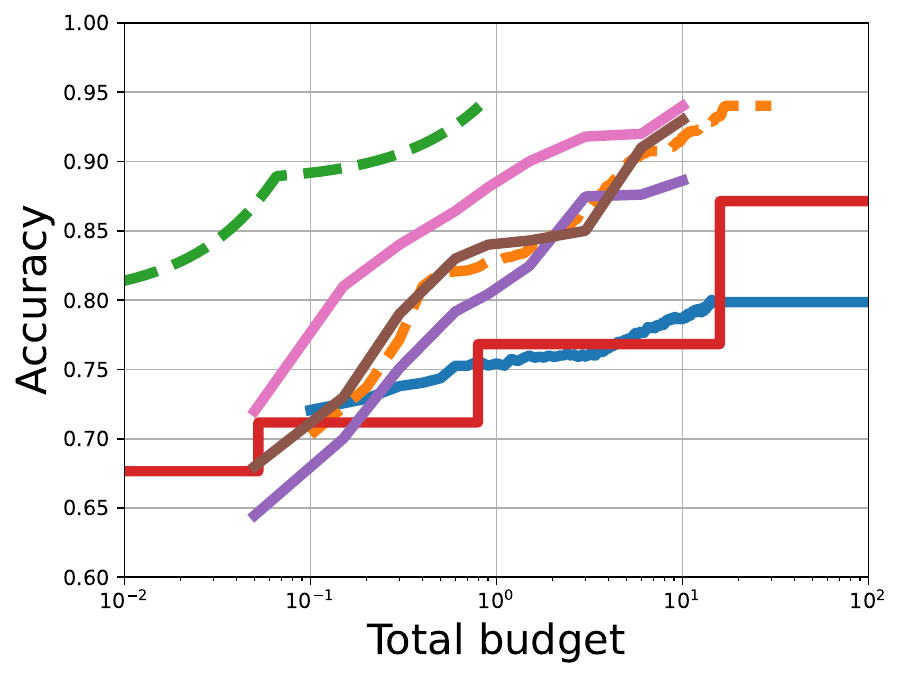}};
  \node at (0,-1.5) [scale=0.5] {CSQA, $\alpha = \frac{1}{50}$};
\end{tikzpicture}
\end{subfigure}
\begin{subfigure}[b]{0.26\textwidth}
    \centering
	\begin{tikzpicture}
		\node at (0,0) [scale=0.24]{\includegraphics{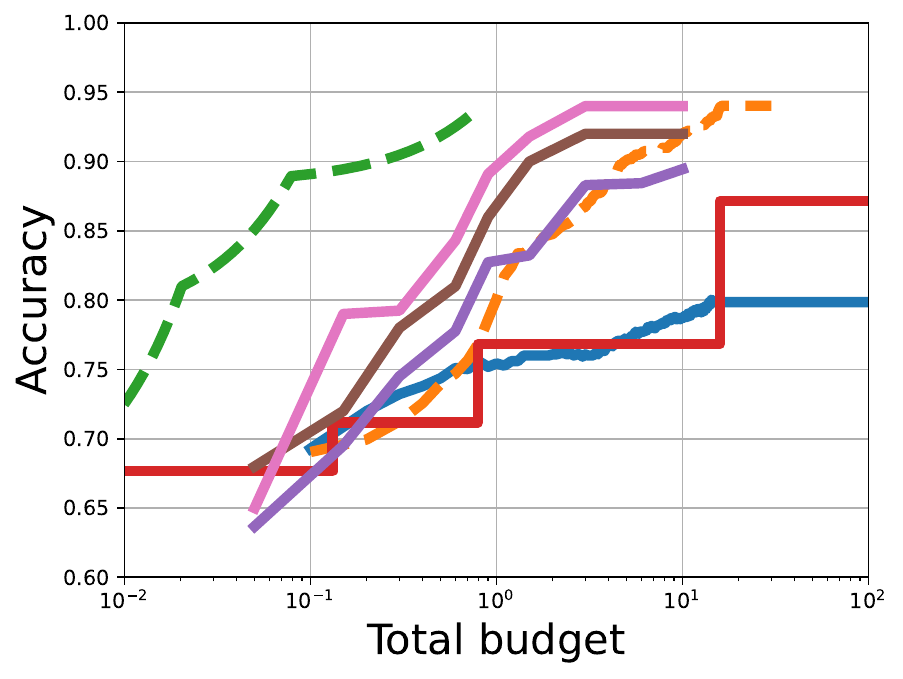}};
  \node at (0,-1.5) [scale=0.5] {CSQA, $\alpha = \frac{1}{20}$};
\end{tikzpicture}\label{fig:ece}
\end{subfigure}
\begin{subfigure}[b]{0.26\textwidth}
    \centering
	\begin{tikzpicture}
		\node at (0,0) [scale=0.24]{\includegraphics{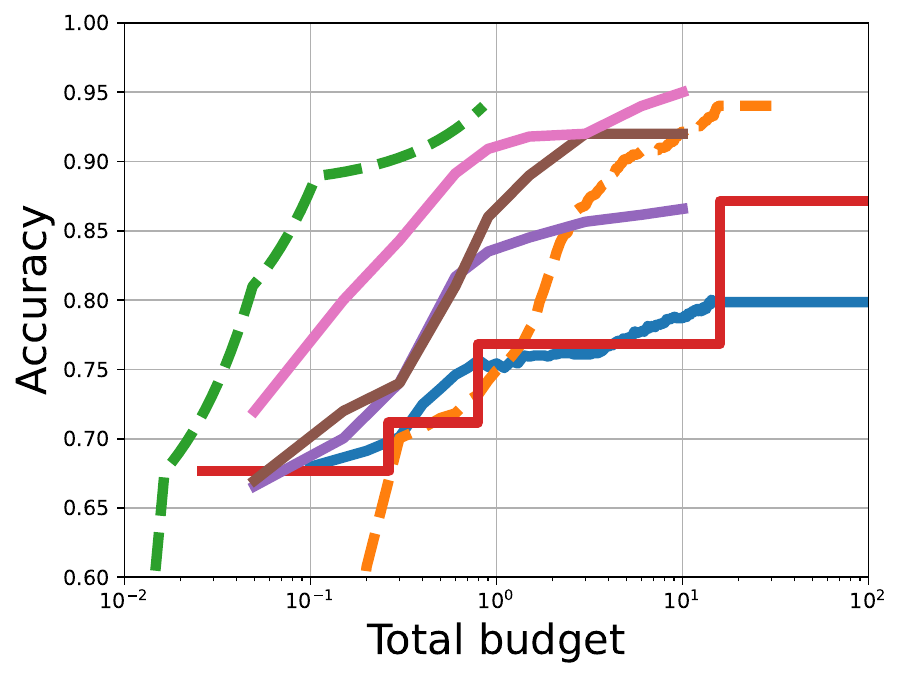}};
  \node at (0,-1.5) [scale=0.5] {CSQA, $\alpha = \frac{1}{10}$};
\end{tikzpicture}
\end{subfigure}\\
\begin{subfigure}[b]{0.26\textwidth}
    \centering
	\begin{tikzpicture}
		\node at (0,0) [scale=0.24]{\includegraphics{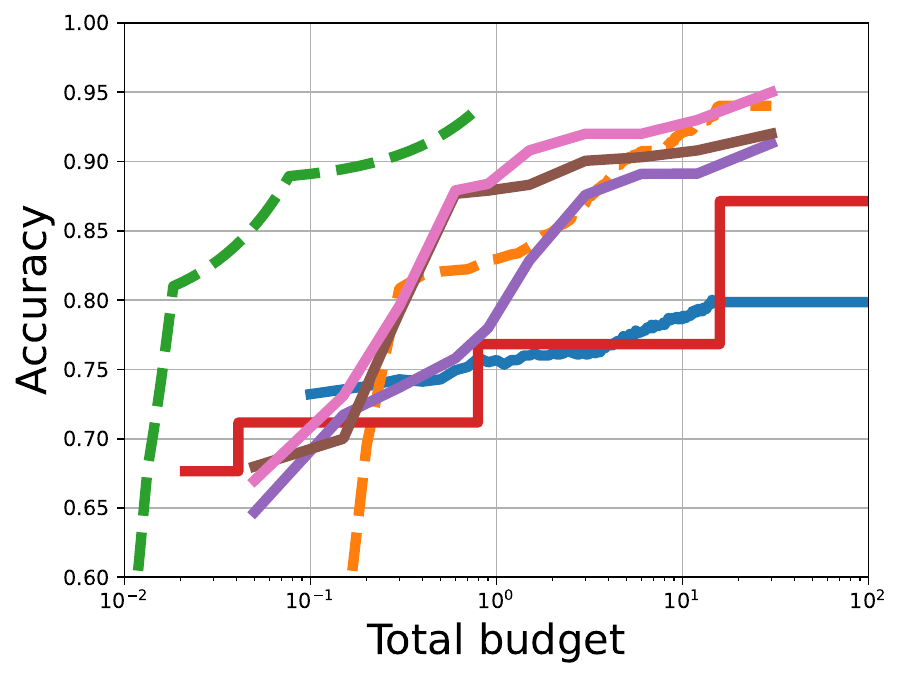}};
  \node at (0,-1.5) [scale=0.5] {CSQA, $\beta =$1M};
\end{tikzpicture}
\end{subfigure}
\begin{subfigure}[b]{0.26\textwidth}
    \centering
	\begin{tikzpicture}
		\node at (0,0) [scale=0.24]{\includegraphics{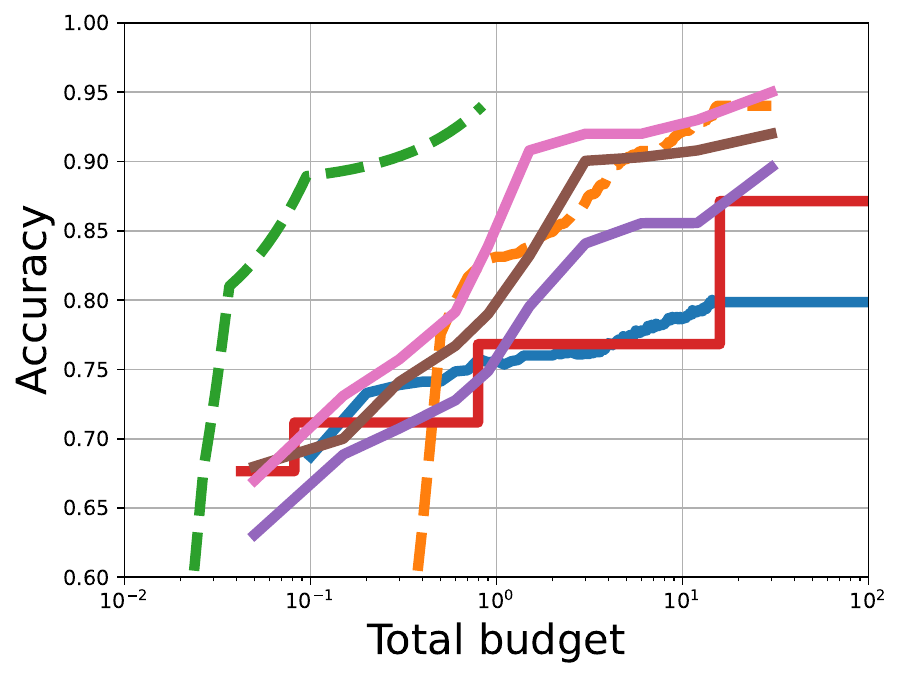}};
  \node at (0,-1.5) [scale=0.5] {CSQA, $\beta =$500k};
\end{tikzpicture}\label{fig:ece}
\end{subfigure}
\begin{subfigure}[b]{0.26\textwidth}
    \centering
	\begin{tikzpicture}
		\node at (0,0) [scale=0.24]{\includegraphics{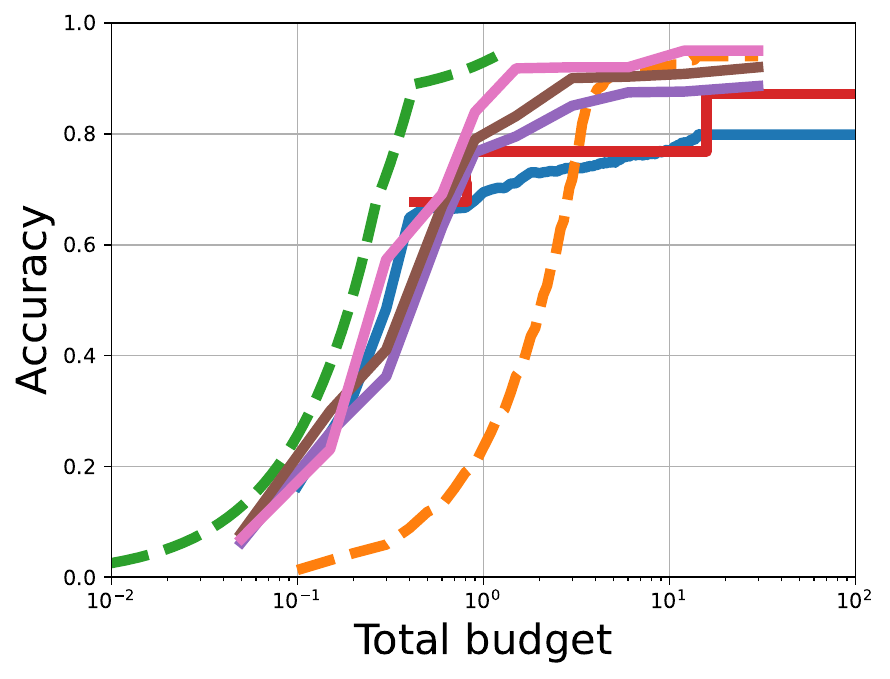}};
  \node at (0,-1.5) [scale=0.5] {CSQA, $\beta =$50k};
\end{tikzpicture}
\end{subfigure}\\
\begin{subfigure}[b]{0.26\textwidth}
    \centering
	\begin{tikzpicture}
		\node at (0,0) [scale=0.24]{\includegraphics{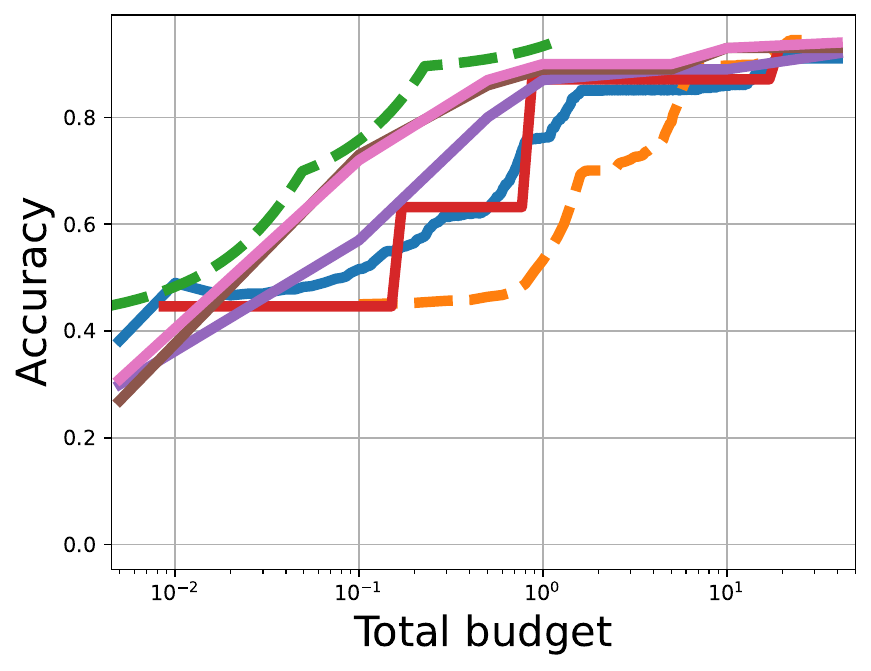}};
  \node at (0,-1.5) [scale=0.5] {LLC, $\alpha = \frac{1}{10}$};
\end{tikzpicture}
\end{subfigure}
\begin{subfigure}[b]{0.26\textwidth}
    \centering
	\begin{tikzpicture}
		\node at (0,0) [scale=0.24]{\includegraphics{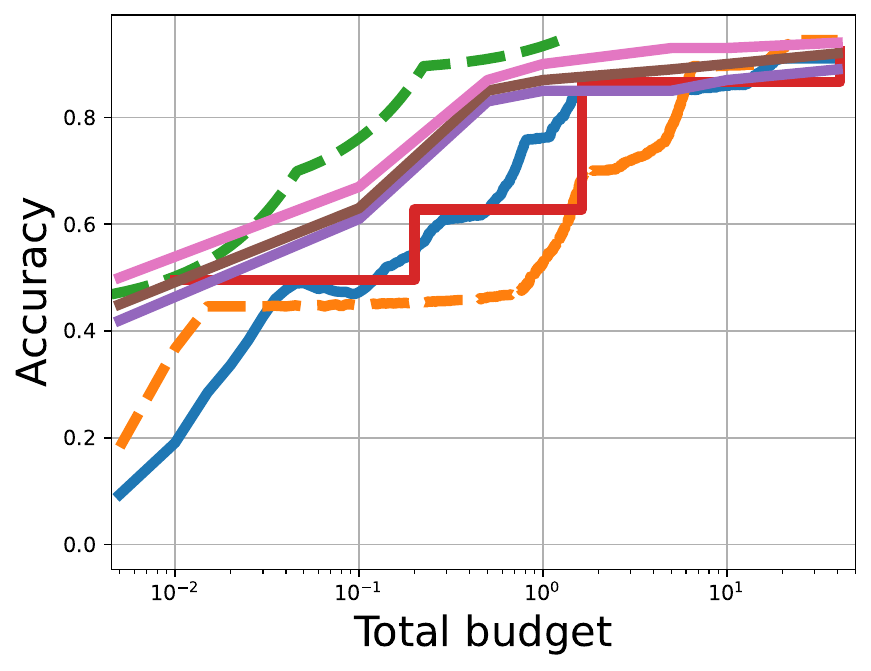}};
  \node at (0,-1.5) [scale=0.5] {LLC, $\alpha = \frac{1}{20}$};
\end{tikzpicture}
\end{subfigure}
\begin{subfigure}[b]{0.26\textwidth}
    \centering
	\begin{tikzpicture}
		\node at (0,0) [scale=0.24]{\includegraphics{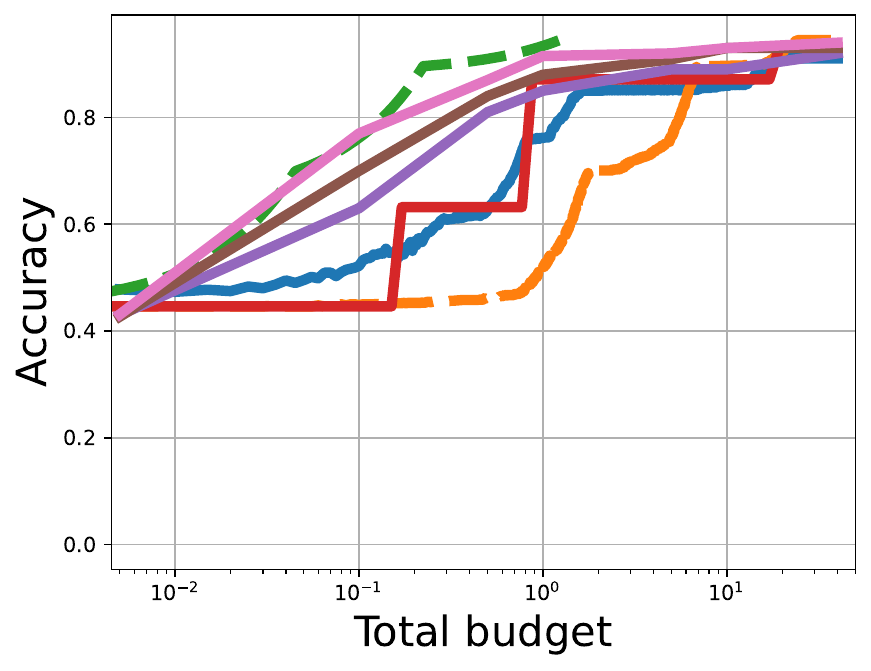}};
  \node at (0,-1.5) [scale=0.5] {LLC, $\alpha = \frac{1}{50}$};
\end{tikzpicture}
\end{subfigure}\\
\begin{subfigure}[b]{0.24\textwidth}
    \centering
	\begin{tikzpicture}
		\node at (0,0) [scale=0.24]{\includegraphics{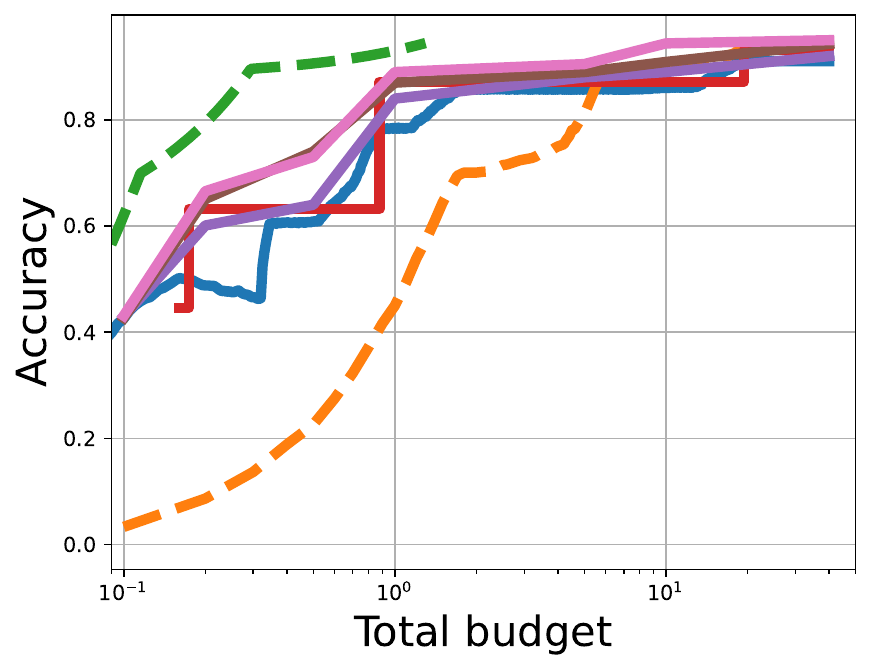}};
  \node at (0,-1.5) [scale=0.5] {LLC, $\beta =$150k};
\end{tikzpicture}
\end{subfigure}
\begin{subfigure}[b]{0.24\textwidth}
    \centering
	\begin{tikzpicture}
		\node at (0,0) [scale=0.24]{\includegraphics{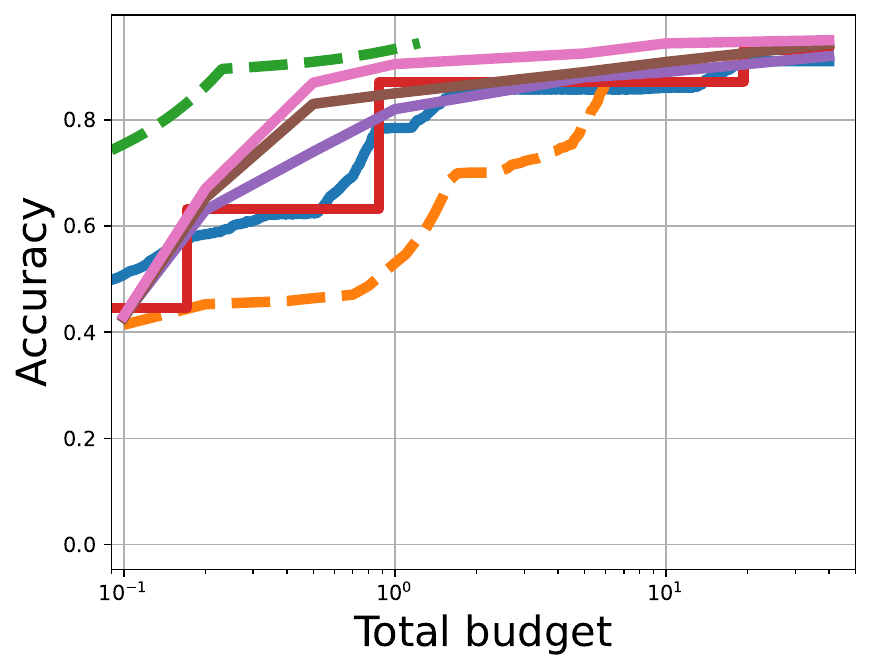}};
  \node at (0,-1.5) [scale=0.5] {LLC, $\beta =$1.5M};
\end{tikzpicture}\label{fig:ece}
\end{subfigure}
\begin{subfigure}[b]{0.24\textwidth}
    \centering
	\begin{tikzpicture}
		\node at (0,0) [scale=0.24]{\includegraphics{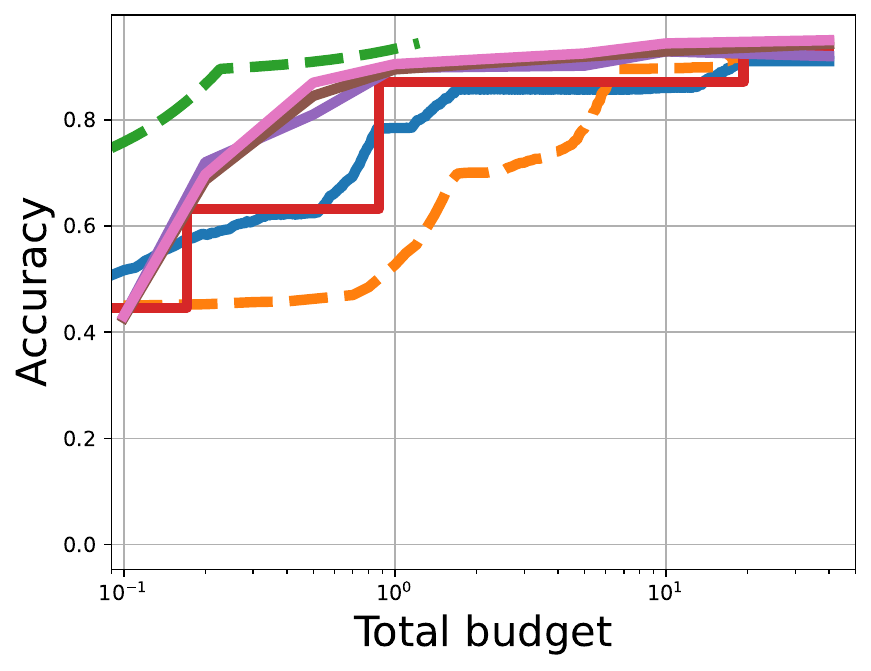}};
  \node at (0,-1.5) [scale=0.5] {LLC, $\beta =$3M};
\end{tikzpicture}
\end{subfigure}\\
\begin{subfigure}[b]{1\textwidth}
    \centering
	\begin{tikzpicture}
		\node at (0,0) [scale=0.4]{\includegraphics{legend.pdf}};
\end{tikzpicture}
\end{subfigure}
\vspace{-10pt}
\caption{The performance of various methods for different cost functions and budget constraints. The dashed lines are methods that have ground knowledge, which is impractical but illustrates the best achievable performance.
}
\vspace{-10pt}
\label{fig:all_12_app}
\end{figure*}

\begin{table*}[ht]
    \centering
    \tiny
    \caption{Overview of average cost (\$) per query for different models and prompting strategies' combinations in different pricing strategies.}
    \label{tab:single model cost}
    \begin{subtable}[h]{0.95\textwidth}
    \centering
    \caption{GSM8K dataset}
    \resizebox{\linewidth}{!}{
    \begin{tabular}{c|c|c|c|c|c|c}
    \hline
        \multirow{2}{*}{Pricing Strategy} & Llama-2-7b & Llama-2-13b & GPT-3.5-turbo (old) & GPT-3.5-turbo (old) & GPT-4 & GPT-4\\
         & CoT & CoT & Domain Expert & CoT & Domain Expert & CoT \\\hline
        Pure monetary, $\alpha=10$ & $1.61\times10^{-5}$ & $1.67\times10^{-4}$ & $3.66\times10^{-4}$ & $1.38\times10^{-3}$ & $7.57\times10^{-3}$ & $2.94\times10^{-2}$\\
        Pure monetary, $\alpha=20$ & $4.01\times10^{-6}$ & $8.36\times10^{-5}$ & $3.66\times10^{-4}$ & $1.38\times10^{-3}$ & $7.57\times10^{-3}$ & $2.94\times10^{-2}$\\
        Pure monetary, $\alpha=50$ & $6.42\times10^{-7}$ & $3.35\times10^{-5}$ & $3.66\times10^{-4}$ & $1.38\times10^{-3}$ & $7.57\times10^{-3}$ & $2.94\times10^{-2}$\\\hline
        Price-latency combo, $\beta=50K$ & $1.60\times10^{-5}$ & $3.20\times10^{-4}$ & $4.86\times10^{-4}$ & $1.50\times10^{-3}$ & $7.69\times10^{-3}$ & $2.95\times10^{-2}$\\
        Price-latency combo, $\beta=500K$ & $1.60\times10^{-5}$ & $3.20\times10^{-5}$ & $3.78\times10^{-4}$ & $1.39\times10^{-3}$ & $7.58\times10^{-3}$ & $2.94\times10^{-2}$\\
        Price-latency combo, $\beta=1000K$ & $8.00\times10^{-6}$ & $1.60\times10^{-5}$ & $3.72\times10^{-4}$ & $1.38\times10^{-3}$ & $7.58\times10^{-3}$ & $2.94\times10^{-2}$ \\\hline\hline
        \multirow{2}{*}{Pricing Strategy} & MetaMath-7b & MetaMath-13b & GPT-3.5-turbo (new) & GPT-3.5-turbo (new) & GPT-4-turbo & GPT-4-turbo\\
         & Domain Expert & Domain Expert & Domain Expert & CoT & Domain Expert & CoT \\\hline
         Pure monetary, $\alpha=10$ & $4.97\times10^{-6}$ & $5.01\times10^{-5}$ & $3.20\times10^{-4}$ & $9.90\times10^{-4}$ & $5.91\times10^{-3}$ & $1.22\times10^{-2}$\\
         Pure monetary, $\alpha=20$ & $1.24\times10^{-6}$ & $2.50\times10^{-5}$ & $3.20\times10^{-4}$ & $9.90\times10^{-4}$ & $5.91\times10^{-3}$ & $1.22\times10^{-2}$\\
         Pure monetary, $\alpha=50$ & $1.99\times10^{-7}$ & $1.00\times10^{-5}$ &$3.20\times10^{-4}$ & $9.90\times10^{-4}$ & $5.91\times10^{-3}$ & $1.22\times10^{-2}$\\\hline
         Price-latency combo, $\beta=50K$ & $1.60\times10^{-4}$ & $3.20\times10^{-4}$ & $4.40\times10^{-4}$ & $1.11\times10^{-3}$ & $6.03\times10^{-3}$ & $1.23\times10^{-2}$\\
         Price-latency combo, $\beta=500K$ & $1.60\times10^{-5}$ & $3.20\times10^{-5}$ & $3.32\times10^{-4}$ & $1.00\times10^{-3}$ & $5.92\times10^{-3}$ & $1.22\times10^{-2}$\\
         Price-latency combo, $\beta=1000K$ & $8.00\times10^{-6}$ & $1.60\times10^{-5}$ & $3.26\times10^{-4}$ & $9.96\times10^{-4}$ & $5.92\times10^{-3}$ & $1.22\times10^{-2}$\\\hline
    \end{tabular}
        }
    \label{tab:GSM8K single model cost}
    \end{subtable}
    \begin{subtable}[h]{0.95\textwidth}
    \centering
    \vspace{0.05in}
    \caption{CSQA dataset}
    \begin{tabular}{c|c|c|c|c}
    \hline
        \multirow{2}{*}{Pricing Strategy} & Llama-2-7b & Llama-2-13b & GPT-3.5-turbo & GPT-4 \\
         & CoT & CoT & Standard & Standard \\\hline
        Pure monetary, $\alpha=10$ & $1.98\times10^{-5}$ & $1.98\times10^{-4}$ & $6.02\times10^{-4}$ & $1.20\times10^{-2}$ \\
        Pure monetary, $\alpha=20$ & $4.96\times10^{-6}$ & $9.92\times10^{-5}$ & $6.02\times10^{-4}$ & $1.20\times10^{-2}$\\
        Pure monetary, $\alpha=50$ & $7.94\times10^{-7}$ & $3.97\times10^{-5}$ & $6.02\times10^{-4}$ & $1.20\times10^{-2}$\\\hline
        Price-latency combo, $\beta=50K$ & $3.20\times10^{-4}$ & $6.20\times10^{-4}$ & $6.08\times10^{-4}$ & $1.20\times10^{-2}$\\
        Price-latency combo, $\beta=500K$ & $3.20\times10^{-5}$ & $6.20\times10^{-5}$ & $6.02\times10^{-4}$ & $1.20\times10^{-2}$\\
        Price-latency combo, $\beta=1000K$ & $1.60\times10^{-5}$ & $3.10\times10^{-5}$ & $6.02\times10^{-4}$ & $1.20\times10^{-2}$\\\hline
    \end{tabular}
    \label{tab:CSQA single model cost}
    \end{subtable}
    \begin{subtable}[h]{0.95\textwidth}
    \centering
    \vspace{0.05in}
    \caption{LLC dataset}
    \begin{tabular}{c|c|c|c|c}
    \hline
        \multirow{2}{*}{Pricing Strategy} & Llama-2-7b & GPT-3.5-turbo & GPT-3.5-turbo & GPT-4 \\
         & CoT & plaintext & CoT & CoT \\\hline
        Pure monetary, $\alpha=10$ & $6.60\times10^{-6}$ & $1.20\times10^{-4}$ & $5.83\times10^{-4}$ & $1.29\times10^{-2}$ \\
        Pure monetary, $\alpha=20$ & $1.65\times10^{-6}$ & $1.20\times10^{-4}$ & $5.83\times10^{-4}$ & $1.29\times10^{-2}$ \\
        Pure monetary, $\alpha=50$ & $2.64\times10^{-7}$ & $1.20\times10^{-4}$ & $5.83\times10^{-4}$ & $1.29\times10^{-2}$ \\\hline
        Price-latency combo, $\beta=150K$ & $1.07\times10^{-4}$ & $1.22\times10^{-4}$ & $5.85\times10^{-4}$ & $1.29\times10^{-2}$\\
        Price-latency combo, $\beta=1500K$ & $1.07\times10^{-5}$ & $1.20\times10^{-4}$ & $5.83\times10^{-4}$ & $1.29\times10^{-2}$\\
        Price-latency combo, $\beta=3000K$ & $5.33\times10^{-6}$ & $1.20\times10^{-4}$ & $5.83\times10^{-4}$ & $1.29\times10^{-2}$\\\hline
    \end{tabular}
    \label{tab:LLC single model cost}
    \end{subtable}
\end{table*}
\begin{table*}[ht]
    \centering
    \tiny
    \caption{Overview of average accuracy of different models and different prompting strategies' combinations. In GSM8K table, simple CoT few-shot and complex CoT few-shot mean CoT few-shot prompts with easy and hard examples.}
    \label{tab:single model accuracy}
    \begin{subtable}[h]{0.95\textwidth}
    \centering
    \caption{GSM8K dataset}
    \resizebox{\linewidth}{!}{
    \begin{tabular}{c|c|c|c|c|c|c}
    \hline
        \multirow{2}{*}{Model} & \multirow{2}{*}{System Prompt} & \multirow{2}{*}{Content Prompt} & Training Set Accuracy & Testing Set Accuracy & avg input length & avg output length\\
         & & & (\%) & (\%) & (Training/Testing) & (Training/Testing) \\\hline
        Llama-2-7b & ``Follow example" & simple CoT few-shot & 23.36 & 23.65 & 909.81/911.43 & 120.49/119.14\\\hline
        \multirow{4}{*}{Llama-2-13b} & NA & simple CoT few-shot & 35.65 & 33.81 & 827.81/829.43  & 218.42/214.38 \\
                    & domain Eexpert & plain text & 4.47 & 25.70 & 90.15/83.43 & 28.83/130.51 \\
                    & ``Follow example" & simple CoT few-shot & 37.90 & 37.91 & 909.81/911.43 & 128.41/128.29 \\
                    & ``Follow example" & complex CoT few-shot & 42.77 & 44.05 & 2943.81/2945.43 & 328.99/326.11 \\\hline
        \multirow{3}{*}{GPT-3.5-turbo} & domain expert & plain text & 76.60 & 73.62 & 88.31/90.98 & 125.07/114.58 \\
                      & ``Follow example" & simple CoT few-shot & 82.00 & 79.15 & 772.05/773.70 & 107.23/108.31 \\
                      & ``Follow example" & complex CoT few-shot & 83.30 & 82.94 & 2419.00/2416.00 & 82.00/49.00 \\\hline
        \multirow{2}{*}{GPT-4-turbo} & domain expert & plain text & 88.18 & 88.48 & 87.31/88.98 & 166.97/167.41 \\
              & ``Follow example" & simple CoT few-shot & 92.61 & 92.34 & 770.05/771.70 & 146.83/149.67 \\\hline
        \multirow{2}{*}{GPT-4} & domain expert & plain text & 84.33 & 83.17 & 87.31/88.98 & 78.00/81.73 \\
              & ``Follow example" & simple CoT few-shot & 93.95 & 92.95 & 770.05/771.70 & 101.51/103.62 \\\hline
        MetaMath-7b & domain expert & plain text & 92.48 & 66.19 & 109.15/110.80 & 48.03/54.68 \\\hline
        MetaMath-13b & domain expert & plain text & 92.81 & 70.43 & 109.15/110.80 & 47.26/56.36\\\hline
        \multirow{2}{*}{PaLM} & ``Follow example" & simple CoT few-shot & 63.05 & 62.17 & 860.04/861.70 & 115.15/115.51 \\
             & ``Follow example" & complex CoT few-shot & 66.95 & 64.14 & 1918.04/1919.70 & 110.68/110.69 \\\hline
    \end{tabular}
        }
    \label{tab:GSM8K single model accuracy}
    \end{subtable}
    \begin{subtable}[h]{0.95\textwidth}
    \centering
    \vspace{0.05in}
    \caption{CSQA dataset}
    \resizebox{\linewidth}{!}{
    \begin{tabular}{c|c|c|c|c|c|c}
    \hline
        \multirow{2}{*}{Model} & \multirow{2}{*}{System Prompting} & \multirow{2}{*}{Content Prompting} & Training Set Accuracy & Testing Set Accuracy & avg input length & avg output length\\
         & & & (\%) & (\%) & (Training/Testing) & (Training/Testing) \\\hline
        \multirow{3}{*}{Llama-2-7b} & NA & plain text & 33.76 & 34.23 & 52.36/52.06 & 86.44/81.03 \\
                   & ``Follow example" & standard few-shot & 58.86 & 63.06 & 446.36/446.06 & 512.00/512.00 \\
                  & ``Follow example" & CoT few-shot & 64.72 & 67.65 & 640.36/640.06 & 512.00/512.00 \\\hline
        \multirow{3}{*}{Llama-2-13b} & NA & plain text & 29.45 & 32.10 & 52.36/52.06 & 287.60/287.34 \\
                                     & ``Follow example" & standard few-shot & 65.29 & 66.83 & 446.36/446.06 & 512.00/511.80 \\
                                     & ``Follow example" & CoT few-shot & 68.19 & 71.17 & 640.30/640.06 & 512.00/512.00 \\\hline
        \multirow{3}{*}{GPT-3.5-turbo} & NA & plain text & 71.37 & 73.96 & 56.74/56.44 & 4.48/4.49 \\
                                     & ``Follow example" & standard few-shot & 74.09 & 76.82 & 396.74/396.44 & 3.50/3.55 \\
                                     & ``Follow example" & CoT few-shot & 68.31 & 68.55 & 575.74/575.44 & 16.51/16.74 \\\hline
        \multirow{3}{*}{GPT-4} & NA & plain text & 79.86 & 83.46 & 56.74/56.44 & 4.71/4.69 \\
                                     & ``Follow example" & standard few-shot & 84.29 & 87.14 & 396.74/396.44 & 2.00/2.00 \\
                                     & ``Follow example" & CoT few-shot & 82.12 & 85.83 & 575.74/575.44 & 5.05/5.20 \\\hline
    \end{tabular}
        }
    \label{tab:CSQA single model accuracy}
    \end{subtable}
    \begin{subtable}[h]{0.95\textwidth}
    \centering
    \vspace{0.05in}
    \caption{LLC dataset}
    \resizebox{\linewidth}{!}{
    \begin{tabular}{c|c|c|c|c|c|c}
    \hline
        \multirow{2}{*}{Model} & \multirow{2}{*}{System Prompting} & \multirow{2}{*}{Content Prompting} & Training Set Accuracy & Testing Set Accuracy & avg input length & avg output length\\
         & & & (\%) & (\%) & (Training/Testing) & (Training/Testing) \\\hline
        \multirow{3}{*}{Llama-2-7b} & NA & plain text & 0.06 & 0.13 & 26.25/26.17 & 51.74/51.61 \\
                   & ``Follow example" & standard few-shot & 0.94 & 1.4 & 155.25/358.24 & 155.17/352.73 \\
                  & ``Follow example" & CoT few-shot & 44.23 & 44.60 & 344.25/344.17 & 71.77/71.11 \\\hline
        \multirow{3}{*}{Llama-2-13b} & NA & plain text & 9.01 & 9.73 & 26.25/26.17 & 55.01/54.55 \\
                                     & ``Follow example" & standard few-shot & 2.41 & 2.93 & 155.25/155.17 & 239.28/243.78 \\
                                     & ``Follow example" & CoT few-shot & 48.63 & 48.87 & 344.25/491.84 & 71.77/493.51 \\\hline
        \multirow{3}{*}{GPT-3.5-turbo} & NA & plain text & 62.71 & 63.20 & 30.51/30.45 & 36.93/34.08 \\
                                     & ``Follow example" & standard few-shot & 8.16 & 9.47 & 138.51/138.45 & 5.54/5.51 \\
                                     & ``Follow example" & CoT few-shot & 87.13 & 86.53 & 304.51/304.45 & 63.01/62.86 \\\hline
        \multirow{3}{*}{GPT-4} & NA & plain text & 80.54 & 81.73 & 30.51/29.92 & 36.93/30.24 \\
                                     & ``Follow example" & standard few-shot & 23.74 & 24.27 & 138.51/138.45 & 5.72/5.72 \\
                                     & ``Follow example" & CoT few-shot & 92.68 & 93.2 & 304.51/304.45 & 63.00/62.86 \\\hline
    \end{tabular}
        }
    \label{tab:LLC single model accuracy}
    \end{subtable}
\end{table*}

\subsection{Ablation experiments}\label{app:ablation}
We also run ablation experiments showing that prompt selection is useful, compared to using a fixed prompt (\eg CoT). The results are shown in \Cref{fig:ablation}, where \alg outperforms ``\alg (CoT only)'', indicating that the ability to choose the prompt helps.

\begin{figure}[t]
\begin{center}
\includegraphics[scale=0.45]{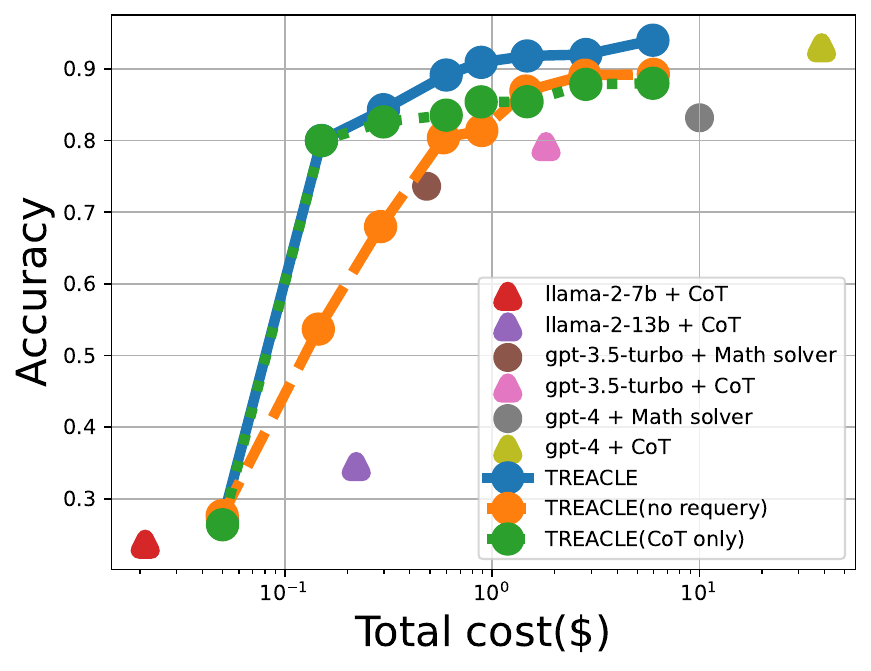}
\end{center}
\vspace{-3mm}
\caption{Ablation study. Without \alg's re-query and prompt selection, the performance decreases dramatically.
}
\label{fig:ablation}
\vspace{-15pt}
\end{figure}
\subsection{Additional new LLM experiments}\label{app:Finetuning}

In this subsection, we report additional results relating to \Cref{sec:new_llm}.
The performance of the fine-tuned models with the API price  adjustments or the improved open-source LLM
is shown in \Cref{fig:finetune_app} (``Finetuned:GPT'' and ``Finetuned:Llama'', respectively).
The results show that the fine-tuned model with both improvements (``Finetuned:all'', same as \Cref{fig:clean-full-new}) performs the best.
The sample efficiency results for fine-tuning these models  with both types of changes (corresponding to ``Finetuned:all'') are shown in \Cref{fig:finetune_app_sample}.

\begin{figure}[t]
\begin{center}
\includegraphics[width=0.40\textwidth]{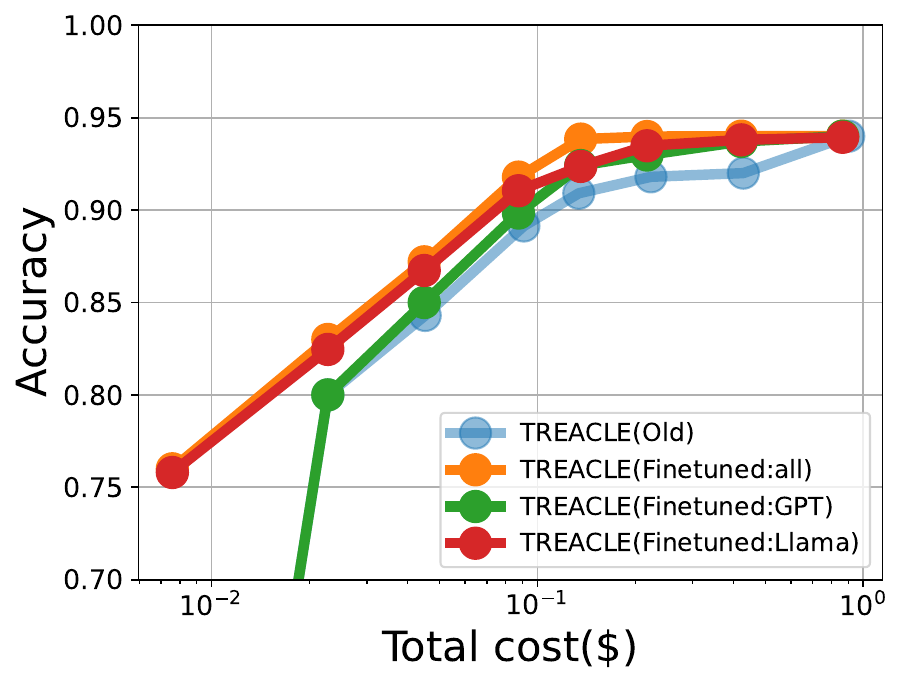}
\includegraphics[width=0.55\textwidth]{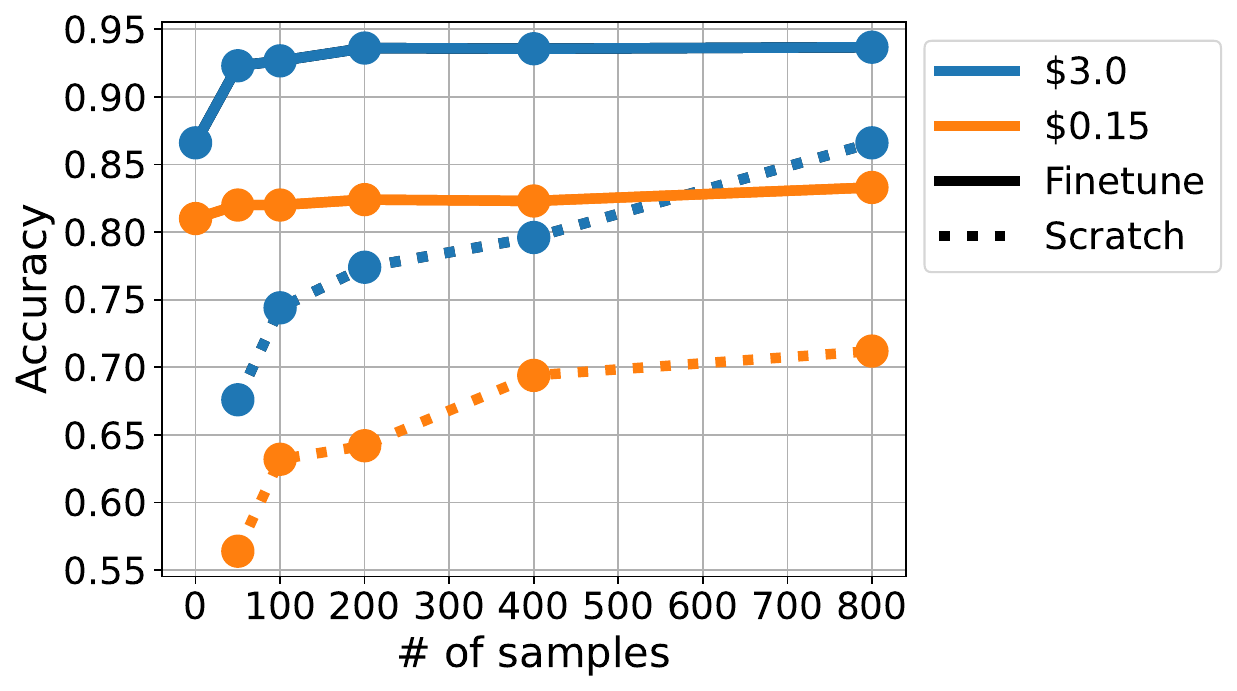}
\end{center}
\vspace{-3mm}
\caption{Additional new LLM results. Left: Zoomed in view of the accuracy with new GPT models, new Llama models, or both. Right: Sample efficiency with both new GPT models and new LLama models.
}
\label{fig:finetune_app}
\label{fig:finetune_app_sample}
\end{figure}

\section{API prices}
\label{app:api_prices}

\Cref{tab:single model cost} shows further details on the parameters used in the cost functions described in \Cref{sec:problem}.
\Cref{tab:API price} shows the change of the GPT API's monetary price in different API versions, relating to \Cref{sec:new_llm}.

\begin{table*}[ht]
    \centering
    \caption{API price GPT API price of different versions, where NA means no corresponding model at that version. The price unit is \$/1K tokens. In our experiment setting, old version refers to the version of 0613 and the new version refers to the version 1106.}
    \label{tab:API price}
    \begin{subtable}[h]{0.95\textwidth}
    \centering
    \begin{tabular}{c|cc|cc|cc}
    \hline
        \multirow{2}{*}{} & \multicolumn{2}{c|}{0613} & \multicolumn{2}{c|}{1106} & \multicolumn{2}{c}{0125} \\
         & input & output & input & output & input & output \\\hline\hline
        GPT-3.5-turbo & 0.0015 & 0.002 & 0.001 & 0.002 & 0.0005 & 0.002 \\\hline
        GPT-4-turbo & \multicolumn{2}{c|}{NA} & 0.01 & 0.03 & 0.01 & 0.03 \\\hline
        GPT-4 & 0.06 & 0.12 & 0.06 & 0.12 & 0.06 & 0.12 \\\hline
    \end{tabular}
    
    \end{subtable}
\end{table*}
\section{Full Prompts}
\label{app:full_prompts}
In this section, we show our full prompts.
\begin{table*}[ht]
\small
    \centering
    \caption{Domain expert prompting strategy (``Math solver" and ``Math assistant") in GSM8K dataset, where \{Question\} means that original question text.}
    \label{tab:GSM8K Math Solver prompt}
    \begin{subtable}[h]{0.95\textwidth}
    \centering
    \begin{tabular}{p{9cm}|p{3cm}}
    \hline
        \textbf{System Prompt} & \textbf{User Content Prompt} \\\hline\hline
        You are a math solver. Give the answer to the following question. & \{Question\}\\\hline
        \#\#\# Instruction: & \\
        You are a math assistant. Solve the following problem. & \\
        \#\#\# Problem: & \{Question\}\\
        \{User Content Prompt\} & \\
        \#\#\# Answer: & \\
        Let's think step by step.\\\hline
    \end{tabular}
    
    \end{subtable}
\end{table*}
\begin{table*}[ht]
    \centering
    \scriptsize
    \caption{Chain-of-Thought (CoT) few-shot prompting strategy in GSM8K dataset, where \{Question\} means that original question text.}
    \label{tab:GSM8K CoT prompt}
    \begin{subtable}[h]{0.95\textwidth}
    \centering
    \begin{tabular}{p{3cm}|p{9.5cm}}
    \hline
        \textbf{System Prompt} & \textbf{User Content Prompt} \\\hline\hline
        Follow the given examples and answer the question. & Question: There are 15 trees in the grove. Grove workers will plant trees in the grove today. After they are done, there will be 21 trees. How many trees did the grove workers plant today? \\
        & Let's think step by step\\
        & There are 15 trees originally.\\
        & Then there were 21 trees after some more were planted.\\
        & So there must have been 21 - 15 = 6.\\
        & The answer is 6.\\
        & \\
        & Question: If there are 3 cars in the parking lot and 2 more cars arrive, how many cars are in the parking lot?\\
        & Let's think step by step\\
        & There are originally 3 cars.\\
        & 2 more cars arrive.\\
        & 3 + 2 = 5.\\
        & The answer is 5.\\
        & \\
        & Question: Leah had 32 chocolates and her sister had 42. If they ate 35, how many pieces do they have left in total?\\
        & Let's think step by step\\
        & Originally, Leah had 32 chocolates.\\
        & Her sister had 42.\\
        & So in total they had 32 + 42 = 74.\\
        & After eating 35, they had 74 - 35 = 39.\\
        & The answer is 39.\\
        & \\
        & Question: Jason had 20 lollipops. He gave Denny some lollipops. Now Jason has 12 lollipops. How many lollipops did Jason give to Denny?\\
        & Let's think step by step\\
        & Jason started with 20 lollipops.\\
        & Then he had 12 after giving some to Denny.\\
        & So he gave Denny 20 - 12 = 8.\\
        & The answer is 8.\\
        & \\
        & Question: Shawn has five toys. For Christmas, he got two toys each from his mom and dad. How many toys does he have now?\\
        & Let's think step by step\\
        & Shawn started with 5 toys.\\
        & If he got 2 toys each from his mom and dad, then that is 4 more toys.\\
        & 5 + 4 = 9.\\
        & The answer is 9.\\
        & \\
        & Question: There were nine computers in the server room. Five more computers were installed each day, from monday to thursday. How many computers are now in the server room?\\
        & Let's think step by step\\
        & There were originally 9 computers.\\
        & For each of 4 days, 5 more computers were added.\\
        & So 5 * 4 = 20 computers were added.\\
        & 9 + 20 is 29.\\
        & The answer is 29.\\
        & \\
        & Question: Michael had 58 golf balls. On tuesday, he lost 23 golf balls. On wednesday, he lost 2 more. How many golf balls did he have at the end of wednesday?\\
        & Let's think step by step\\
        & Michael started with 58 golf balls.\\
        & After losing 23 on tues- day, he had 58 - 23 = 35.\\
        & After losing 2 more, he had 35 - 2 = 33 golf balls.\\
        & The answer is 33.\\
        & \\
        & Question: Olivia has \$23. She bought five bagels for \$3 each. How much money does she have left?\\
        & Let's think step by step\\
        & Olivia had 23 dollars.\\
        & 5 bagels for 3 dollars each will be 5 x 3 = 15 dollars.\\
        & So she has 23 - 15 dollars left.\\
        & 23 - 15 is 8.\\
        & The answer is 8.\\
        & \\
        & Question: \{Question\}\\
        & \\
        \hline
    \end{tabular}
    
    \end{subtable}
\end{table*}
\begin{table*}[ht]
    \centering
    \scriptsize
    \caption{Standard few-shot prompting strategy in CSQA dataset, where \{Question\} means that original question text.}
    \label{tab:CSQA Standard prompt}
    \begin{subtable}[h]{0.95\textwidth}
    \centering
    \begin{tabular}{p{13cm}}
    \hline
        \textbf{User Content Prompt} \\\hline\hline
        Q: What do people use to absorb extra ink from a fountain pen? Answer Choices: (A) shirt pocket (B) calligrapher's hand (C) inkwell (D) desk drawer (E) blotter\\
        A: The answer is E.\\
        \\
        Q: What home entertainment equipment requires cable? Answer Choices: (A) radio shack (B) substation (C) television (D) cabinet\\
        A: The answer is C.\\
        \\
        Q: The fox walked from the city into the forest, what was it looking for? Answer Choices: (A) pretty flowers (B) hen house (C) natural habitat (D) storybook\\
        A: The answer is B.\\
        \\
        Q: Sammy wanted to go to where the people were. Where might he go? Answer Choices: (A) populated areas (B) race track (C) desert (D) apartment (E) roadblock\\
        A: The answer is A.\\
        \\
        Q: Where do you put your grapes just before checking out? Answer Choices: (A) mouth (B) grocery cart (C)supermarket (D) fruit basket (E) fruit market\\
        A: The answer is B.\\
        \\
        Q: Google Maps and other highway and street GPS services have replaced what? Answer Choices: (A) united states (B) mexico (C) countryside (D) atlas\\
        A: The answer is D.\\
        \\
        Q: Before getting a divorce, what did the wife feel who was doing all the work? Answer Choices: (A) harder (B) anguish (C) bitterness (D) tears (E) sadness\\
        A: The answer is C.\\
        \\
        Q: \{Question \}\\
        A: The answer is\\
        \hline
    \end{tabular}
    
    \end{subtable}
\end{table*}
\begin{table*}[ht]
    \centering
    \scriptsize
    \caption{Chain-of-Thought (CoT) few-shot prompting strategy in CSQA dataset, where \{Question\} means that original question text.}
    \label{tab:CSQA CoT prompt}
    \begin{subtable}[h]{0.95\textwidth}
    \centering
    \begin{tabular}{p{13cm}}
    \hline
        \textbf{User Content Prompt} \\\hline\hline
        Q: What do people use to absorb extra ink from a fountain pen? Answer Choices: (A) shirt pocket (B) calligrapher's hand (C) inkwell (D) desk drawer (E) blotter\\
        A: The answer must be an item that can absorb ink. Of the above choices, only blotters are used to absorb ink. The answer is E.\\
        \\
        Q: What home entertainment equipment requires cable? Answer Choices: (A) radio shack (B) substation (C) television (D) cabinet\\
        A: The answer must require cable. Of the above choices, only television requires cable. The answer is C.\\
        \\
        Q: The fox walked from the city into the forest, what was it looking for? Answer Choices: (A) pretty flowers (B) hen house (C) natural habitat (D) storybook\\
        A: The answer must be something in the forest. Of the above choices, only natural habitat is in the forest. The answer is B.\\
        \\
        Q: Sammy wanted to go to where the people were. Where might he go? Answer Choices: (A) populated areas (B) race track (C) desert (D) apartment (E) roadblock\\
        A: The answer must be a place with a lot of people. Of the above choices, only populated areas have a lot of people. The answer is A.\\
        \\
        Q: Where do you put your grapes just before checking out? Answer Choices: (A) mouth (B) grocery cart (C)supermarket (D) fruit basket (E) fruit market\\
        A: The answer should be the place where grocery items are placed before checking out. Of the above choices, grocery cart makes the most sense for holding grocery items. The answer is B.\\
        \\
        Q: Google Maps and other highway and street GPS services have replaced what? Answer Choices: (A) united states (B) mexico (C) countryside (D) atlas\\
        A: The answer must be something that used to do what Google Maps and GPS services do, which is to give directions. Of the above choices, only atlases are used to give directions. The answer is D.\\
        \\
        Q: Before getting a divorce, what did the wife feel who was doing all the work? Answer Choices: (A) harder (B) anguish (C) bitterness (D) tears (E) sadness\\
        A: The answer should be the feeling of someone getting divorced who was doing all the work. Of the above choices, the closest feeling is bitterness. The answer is C.\\
        \\
        Q: \{Question \}\\
        A: \\
        \hline
    \end{tabular}
    
    \end{subtable}
\end{table*}

\begin{table*}[ht]
    \centering
    \scriptsize
    \caption{Standard few-shot prompting strategy in LLC dataset, where \{Question\} means that original question text.}
    \label{tab:LLC standard prompt}
    \begin{subtable}[h]{0.95\textwidth}
    \centering
    \begin{tabular}{p{13cm}}
    \hline
        \textbf{User Content Prompt} \\\hline\hline
        Question: Take the last letters of the words in ``Elon Musk" and concatenate them.\\
        The answer is nk.\\
        \\
        Question: Take the last letters of the words in ``Larry Page" and concatenate them.\\
        The answer is ye.\\
        \\
        Question: Take the last letters of the words in ``Sergey Brin" and concatenate them.\\
        The answer is yn.\\
        \\
        Question: Take the last letters of the words in "Bill Gates" and concatenate them.\\
        The answer is ls.\\
        \\
        Question: \{Question\}\\
        \hline
    \end{tabular}
    
    \end{subtable}
\end{table*}

\begin{table*}[ht]
    \centering
    \scriptsize
    \caption{Chain-of-Thought (CoT) few-shot prompting strategy in LLC dataset, where \{Question\} means that original question text.}
    \label{tab:LLC CoT prompt}
    \begin{subtable}[h]{0.95\textwidth}
    \centering
    \begin{tabular}{p{13cm}}
    \hline
        \textbf{User Content Prompt} \\\hline\hline
        Question: Take the last letters of the words in ``Elon Musk" and concatenate them.\\
        Let's think step by step.\\
        The last letter of ``Elon" is ``n". \\
        The last letter of ``Musk" is ``k".\\
        Concatenating them is ``nk".\\
        The answer is nk.\\
        \\
        Question: Take the last letters of the words in ``Larry Page" and concatenate them.\\
        Let's think step by step.\\
        The last letter of "Larry" is ``y".\\
        The last letter of "Page" is ``e".\\
        Concatenating them is ``ye".\\
        The answer is ye.\\
        \\
        Question: Take the last letters of the words in ``Sergey Brin" and concatenate them.\\
        Let's think step by step.\\
        The last letter of ``Sergey" is ``y".\\
        The last letter of "Brin" is ``n".\\
        Concatenating them is ``yn".\\
        The answer is yn.\\
        \\
        Question: Take the last letters of the words in "Bill Gates" and concatenate them.\\
        Let's think step by step.\\
        The last letter of ``Bill" is ``l".\\
        The last letter of ``Gates" is ``s".\\
        Concatenating them is ``ls".\\
        The answer is ls.\\
        \\
        Question: \{Question\}\\
        Let's think step by step.\\
        \hline
    \end{tabular}
    
    \end{subtable}
\end{table*}




\newpage

\end{document}